\newcommand{\inner}[2]{\left\langle #1, #2 \right\rangle}
\newcommand{\xv}{\vec{x}}
\newcommand{\derivativ}[2]{\displaystyle\frac{\partial #1}{\partial #2}}
\newcommand{\hessian}[3]{\nabla^2_{#2,#3}#1}
\newcommand{\pkg}[1]{{\normalfont\fontseries{b}\selectfont #1}}  
\title[Approximation of NNs using Poincar\'e inequalities]{Non-asymptotic approximations of Gaussian neural networks via second-order Poincar\'e inequalities}
\author{\Name{Alberto Bordino} \Email{alberto.bordino@warwick.ac.uk} \\
      \addr University of Warwick
      \AND
      \Name{Stefano Favaro} \Email {stefano.favaro@edu.unito.it} \\
      \addr University of Torino and Collegio Carlo Alberto 
       \AND
        \Name{Sandra Fortini} \Email {sandra.fortini@unibocconi.it} \\
      \addr Bocconi University
      }
\begin{document}

\maketitle

\begin{abstract}
There is a recent and growing literature on large-width asymptotic and non-asymptotic properties of deep Gaussian neural networks (NNs), namely NNs with weights initialized as Gaussian distributions. For a Gaussian NN of depth $L\geq1$ and width $n\geq1$, it is well-known that, as $n\rightarrow+\infty$, the NN's output converges (in distribution) to a Gaussian process. Recently, some quantitative versions of this result, also known as quantitative central limit theorems (QCLTs), have been obtained, showing that the rate of convergence is $n^{-1}$, in the $2$-Wasserstein distance, and that such a rate is optimal. In this paper, we investigate the use of second-order Poincar\'e inequalities as an alternative approach to establish QCLTs for the NN's output. Previous approaches consist of a careful analysis of the NN, by combining non-trivial probabilistic tools with ad-hoc techniques that rely on the recursive definition of the network, typically by means of an induction argument over the layers, and it is unclear if and how they still apply to other NN's architectures. Instead, the use of second-order Poincar\'e inequalities rely only on the fact that the NN is a functional of a Gaussian process, reducing the  problem of establishing QCLTs to the algebraic problem of computing the gradient and Hessian of the NN's output, which still applies to other NN's architectures. We show how our approach is effective in establishing QCLTs for the NN's output, though it leads to suboptimal rates of convergence. We argue that such a worsening in the rates is peculiar to second-order Poincar\'e inequalities, and it should be interpreted as the "cost" for having a straightforward, and general, procedure for obtaining QCLTs. 
\end{abstract}

%%%%%%%%%%%%%%%%%%%%%%%%%%%%%%%%
%%%%%%%%%%%%%%%%%%%%%%%%%%%%%%%%
%%%%%%%%%%%%%%%%%%%%%%%%%%%%%%%%
%%%%%%%%%%%%%%%%%%%%%%%%%%%%%%%%

\section{Introduction}\label{sec:intro}

Let $d,p,L,n \geq 1$ and consider: i) a $d \times p$ matrix (input) $\mathbf{X}$, with $\xv_{j}$ being the $j$-th row and $\xv_u$ being the $u$-th column; ii) an independent random variable (weight) $\mathbf{W}=(\mathbf{W}^{(0)}, \ldots, \mathbf{W}^{(L-1)},\mathbf{w})$ such that $\mathbf{W}^{(l)}=(w_{i, j_l}^{(l)})$, with the $w_{i, j_l}^{(l)}$ 's being i.i.d. as Gaussian $\mathcal{N}(0, \sigma_{w}^{2})$ for $l=0, \ldots, L-1$, $1 \leq i \leq n, 1 \leq j_0 \leq d$, $1\leq j_l\leq n$,  and $\mathbf{w}=(w_{1},\ldots,w_{n})$, with the $w_{i}$'s being i.i.d as Gaussian $\mathcal{N}(0, \sigma_{w}^{2})$ for $i=1,\ldots,n$; ii) an independent random variable (bias) $\mathbf{b}=(\mathbf{b}^{(0)}, \ldots, \mathbf{b}^{(L-1)},b)$ such that  $\mathbf{b}^{(l)}=(b_{1}^{(l)}, \ldots, b_{n}^{(l)})$, with the $b_{i}^{(l)}$ 's being i.i.d. as $\mathcal{N}\left(0, \sigma_{b}^{2}\right)$ for $l=0, \ldots, L-1$ and $i=1,\ldots,n$, and with $b$ being distributed as $\mathcal{N}\left(0, \sigma_{b}^{2}\right)$. For a  function $\tau: \mathbb{R} \rightarrow \mathbb{R}$ (activation), a fully-connected feed-forward deep Gaussian neural network (NN) of depth $L$ and width $n$ is defined as
\begin{equation}\label{eq:output}
\left\{
\begin{array}{ll}
f_{i}^{(1)}(\mathbf{X})=\sum_{j=1}^{d} w_{i, j}^{(0)} \mathbf{x}_{j}+b_{i}^{(0)} \mathbf{1}^{T}& \\
\\
f_{i}^{(l)}(\mathbf{X}, n)=\frac{1}{\sqrt{n}} \sum_{j=1}^{n} w_{i, j}^{(l-1)}(\tau \circ f_{j}^{(l-1)}(\mathbf{X}, n))+b_{i}^{(l-1)} \mathbf{1}^{T}&\quad\quad l=2,\ldots,L\\
\\
f^{(L+1)}(\mathbf{X}, n)=b+\frac{1}{\sqrt{n}}\sum_{i=1}^{n}w_{i}\tau(f_{i}^{(L)}(\mathbf{X}, n)),
\end{array}\right.
\end{equation}
with $\mathbf 1$ and $\circ$ denoting the $p$ dimensional column vector of $1$'s and  the element-wise application, respectively. Let $(f_{i}^{(l)}(\mathbf{X}, n))_{i \geq 1}$ be the sequence or random variables that is obtained by extending $(\mathbf{W}^{(0)}, \ldots, \mathbf{W}^{(L-1)})$ and $(\mathbf{b}^{(0)}, \ldots, \mathbf{b}^{(L-1)})$ to infinite independent arrays, for $l=0, \ldots, L-1$. Under the assumption that $\tau$ is continuous and such that $|\tau(s)| \leq \alpha +\beta|s|$ for every $s \in \mathbb{R}$ and some $\alpha,\beta \geq 0$, \cite[Theorem 4]{matthews2018} shows that as $n \rightarrow+\infty$ jointly over the first $l\geq1$ NN's layers
\begin{equation}\label{eq1_conv}
(f_{i}^{(l)}(\mathbf{X}, n))_{i \geq 1} \stackrel{w}{\longrightarrow}(f_{i}^{(l)}(\mathbf{X}))_{i \geq 1},
\end{equation}
where $(f_{i}^{(l)}(\mathbf{X}))_{i \geq 1}$, as a stochastic process indexed by $\mathbf{X}$, is distributed according to the product measure of $p$-dimensional Gaussian measures, namely $\otimes_{i \geq 1} \mathcal{N}_{p}(\mathbf{0}, \Sigma^{(l)})$, for a suitable specification of $\Sigma^{(l)}$. The work of \cite{matthews2018} improves over previous results of \citet{Neal96} and \citet{lee2017deep}, and it has been later refined and generalized in, e.g., \citet{Garriga2018}, \citet{Lee2018}, \citet{novak2018bayesian}, \citet{Antognini}, \citet{yang1}, \citet{aitken}, \citet{andreass}, \citet{bracale}, \citet{favaro22}, \citet{Lee22} and \citet{Hanin}.

There is a recent interest in quantitative versions of (\ref{eq1_conv}), also known as quantitative central limit theorems (QCLTs), characterizing the rate of convergence of the NN's output $f^{(L+1)}(\mathbf{X},n)$ to its infinite-wide limit, with respect to suitable distances. This problem was first investigated by \citet{eldan} for a shallow NN, i.e. $L=1$, on the $(d-1)$-sphere with Gaussian distributed $w_{i,j}$'s and Rademacher distributed $w_{i}$'s. In particular, assuming  a polynomial activation function, \citet{eldan} established a functional QCLT in the $2$-Wasserstein distance $d_{W_{2}}$. Some refinements of the work of \citet{eldan}, still for shallow NNs on the $(d-1)$-sphere, are in \citet{Klukowski}, assuming the $w_{i,j}$'s to be Uniformly distributed and the $w_{i}$'s to have a general distribution with finite fourth moment, and in \citet{CMSV23}, assuming the $w_{i,j}$'s to be Gaussian distributed. In the more general setting of deep NNs, i.e. $L\geq2$, \citet{trevisan22} established a QCLT in $d_{W_{2}}$ for the NN's output $f^{(L+1)}(\mathbf{X},n)$. In particular, if $N\sim\mathcal{N}(\mathbf{0},\Sigma^{(L+1)})$ is the infinite-wide limit of $f^{(L+1)}(\mathbf{X},n)$, then, assuming a Lipschitz activation function $\tau$, \cite{trevisan22} proved that
\begin{equation}\label{bast}
d_{W_{2}}(f^{(L+1)}(\mathbf{X}, n),N)\lesssim n^{-1/2}
\end{equation}
\citet{Apollo23} and \citet{Peccati23} generalized the result of \citet{trevisan22} to a broader class of activation functions and to general convex distances. Denoting by $d_{W_{1}}$ and $d_{TV}$ the $1$-Wasserstein distance and the total variation distance, respectively, for $p=1$ \citet{Peccati23} proved that
\begin{equation}\label{peccati_optimal}
\min\left\{ d_{W_{1}}(f^{(L+1)}(\mathbf{X}, n),N), d_{TV}(f^{(L+1)}(\mathbf{X}, n),N) \right\} \gtrsim n^{-1}
\end{equation}
and established a corresponding upper bound that matches the order $n^{-1}$. Interestingly, and somehow surprisingly, this result shows that for $p=1$ the optimal rate of convergence is of order $n^{-1}$. In \citet{trevisan23}, an analogous upper bound of order $n^{-1}$ is achieved for a number $p\geq1$ of inputs. We refer to \citet{BGRS23} for QCLTs in the context of NNs with non-Gaussian weights.

\subsection{Our contributions}

In this paper, we investigate the use of second-order Poincar\'e inequalities to establish QCLTs for $f^{(L+1)}(\mathbf{X},n)$, providing an alternative approach to those developed in \citet{trevisan22}, \citet{Apollo23}, \citet{Peccati23} and \citet{trevisan23}. Second-order Poincar\'e inequalities were introduced in \citet{Cha09} and \citet{Nou09} as a tool to obtain QCLTs for functionals of Gaussian processes, estimating the approximation error between the functional of interest and a Gaussian process, with respect to suitable distances. By using the fact that Gaussian NNs are functionals of Gaussian processes, we establish QCLTs for $f^{(L+1)}(\mathbf{X},n)$ by means of a direct application of second-order Poincar\'e inequalities available in the literature. In particular, we make use of some recent refinements of second-order Poincar\'e inequalities introduced by \citet{Vid20}, which provide tight estimates of the approximation error. For $L=1$, assuming $\tau\in C^{2}(\mathbb{R})$ such $\tau$ and its first and second derivatives are bounded above by  $\alpha+\beta|x|^{\gamma}$, for $\alpha,\beta,\gamma\geq 0$, we show that
\begin{equation}\label{main_intro1}
d_{W_{1}}(f^{(2)}(\mathbf{X}, n),N)\lesssim n^{-1/2}.
\end{equation}
Then, we consider the more general setting $L\geq1$. For $L=2$, under analogous assumptions on $\tau$, we show that 
\begin{equation}\label{main_intro2}
d_{W_{1}}(f^{(3)}(\mathbf{X}, n),N)\lesssim n^{-1/4},
\end{equation}
and conjecture the same rate of convergence for any $L\geq2$. Both  (\ref{main_intro1}) and (\ref{main_intro2}) follow from a direct application of results in \citet{Vid20}, which require the sole computation of the gradient and Hessian of the NN's output. In other terms, for any $L\geq1$, the use of second-order Poincar\'e inequalities reduces the problem of establishing QCLTs for $f^{(L+1)}(\mathbf{X},n)$ to the algebraic problem of computing some derivatives of the NN's output. While it becomes unwieldy as $L$ increases, the computation of the gradient and the Hessian is standard, leading to explicit expressions. In particular, for $L=1$ our approach provides a straightforward proof of a QCLTs for $f^{(L+1)}(\mathbf{X},n)$, with rate $n^{-1/2}$.

Our analysis shows how second-order Poincar\'e inequalities are an effective tool to obtain QCLTs for $f^{(L+1)}(\mathbf{X},n)$, though they lead to suboptimal rates of convergence with respect to previous works: i) for $L=1$, i.e. (\ref{main_intro1}), we achieve the same rate $n^{-1/2}$ as in \citet{trevisan22} and \citet{Apollo23}, which, however, is known to be suboptimal from \citet{Peccati23} and \citet{trevisan23}; ii) for $L\geq2$, i.e. (\ref{main_intro2}), we obtain the rate $n^{-1/4}$, while \citet{trevisan22} and \citet{Apollo23} still achieve the rate $n^{-1/2}$, as well as \citet{Peccati23} and \citet{trevisan23} still achieve the rate $n^{-1}$. We argue that such a worsening in the rate of convergence is peculiar to second-order Poincar\'e inequalities, and it should be interpreted as the "cost" for having a straightforward, and general, procedure to obtain QCLTs for $f^{(L+1)}(\mathbf{X},n)$. The approaches of  \citet{trevisan22}, \citet{Apollo23}, \citet{Peccati23} and \citet{trevisan23} consist of a careful analysis of $f^{(L+1)}(\mathbf{X},n)$, by combining non-trivial probabilistic tools with ad-hoc techniques that rely on the recursive definition (\ref{eq:output}), typically by means of an induction argument over the NN's layers; as such, it is unclear if and how these approaches still apply to other NN's architectures, such as convolutional NNs and generalizations thereof. Instead, our approach consists of a direct application of the results of \citet{Vid20} to $f^{(L+1)}(\mathbf{X},n)$, which, by relying only on the fact that $f^{(L+1)}(\mathbf{X},n)$ is a functional of a Gaussian process, it applies to a broad range of NN's architectures, still reducing the problem of establishing QCLTs to the algebraic problem of computing gradients and Hessians.

\subsection{Organization of the paper}

The paper is structured as follows. In Section \ref{sec:vidotto} we present an overview on second-order Poincar\'e inequalities, recalling some results of \citet{Vid20}. Section \ref{sec:shallow} contains our results for shallow Gaussian NNs, whereas in Section \ref{sec:deep} we extend these results to deep Gaussian NNs. In Section \ref{sec:disucssion} we discuss our results and directions of future research. Proofs and numerical illustrations are in the appendix.

%%%%%%%%%%%%%%%%%%%%%%%%%%%%%%%%
%%%%%%%%%%%%%%%%%%%%%%%%%%%%%%%%
%%%%%%%%%%%%%%%%%%%%%%%%%%%%%%%%
%%%%%%%%%%%%%%%%%%%%%%%%%%%%%%%%

\section{Preliminaries on second-order Poincar\'e inequalities}\label{sec:vidotto}

We denote by $(\Omega, \mathcal{F}, \mathbb{P})$ the probability space on which random variables are assumed to be defined, and by $\lVert X\rVert_{L^q}:= (\mathbb{E}[X^q])^{1/q}$ the $L^q$ norm of a random variable $X$. We consider some popular distances between (probability) distributions of real-valued random variables. In particular, let $X$ and $Y$ be two random variables in $\mathbb{R}^d$, for some $d\geq1$. We denote by $d_{W_1}$ the 1-Wasserstein distance, i.e., 
\begin{displaymath}
d_{W_1}(X, Y)=\sup _{h \in \mathscr{H}}|\mathbb{E}[h(X)]-\mathbb{E}[h(Y)]|,
\end{displaymath}
where $\mathscr{H}$ is the class of all functions $h: \mathbb{R}^d \rightarrow \mathbb{R}$ such that it holds true that $\|h\|_{\text {Lip }} \leq 1$, with $
\|h\|_{\text {Lip }}=\sup _{x, y \in \mathbb{R}^d, x \neq y} |h(x)-h(y)|/\|x-y\|_{\mathbb{R}^d}$. Further, let $d_{T V}$ be the total variation distance, i.e.,
\begin{displaymath}
d_{T V}(X, Y)=\sup _{B \in \mathscr{B}\left(\mathbb{R}^m\right)}|\mathbb{P}(X \in B)- \mathbb{P}(Y \in B)|,
\end{displaymath}
where $\mathscr{B}\left(\mathbb{R}^d\right)$ is the Borel $\sigma$-field of $\mathbb{R}^d$. Finally, let $d_{KS}$ be the Kolmogorov-Smirnov distance, i.e.,
\begin{displaymath}
d_{KS}(X, Y)= \sup _{z_1, \ldots, z_d \in \mathbb{R}} | \mathbb{P}\left(X \in\times_{i=1}^{d}\left(-\infty, z_i\right]\right)-\mathbb{P}\left(Y \in\times_{i=1}^{d}\left(-\infty, z_i\right]\right)|.
\end{displaymath}
In particular, it is useful to recall that: i) $d_{KS}(\cdot, \cdot) \leq d_{T V}(\cdot, \cdot)$; ii) if $X$ is a real-valued random variable and $N \sim \mathcal{N}(0,1)$ is the standard Gaussian random variable then $d_{KS}(X, N) \leq 2 \sqrt{d_{W_1}(X, N)}$.

Second-order Poincar\'e inequalities provide a well-known tool to obtain Gaussian approximations of functionals of Gaussian fields, with respect to suitable distances (\citet{Cha09,Nou09}). See also \citet{Nou12} for details. If $N \sim \mathcal{N}(0,1)$ then the Gaussian PI states that
\begin{equation}\label{first_poinc}
\operatorname{Var}[f(N)]\leq \mathbb{E}[f^{\prime}(N)^{2}]
\end{equation}
for every differentiable function $f : \mathbb{R}\rightarrow \mathbb{R}$, a result that was first discovered in the seminal work of \citet{Nas56}, and then reproved by \citet{Che81}. The inequality (\ref{first_poinc}) implies that if the $L^2$ norm of the random variable  $f^{\prime}(N)$ is small, then so are the fluctuations of the random variable $f(N)$. The first version of a second-order Poincar\'e inequality was obtained in \citet{Cha09}, where it is proved that one can iterate (\ref{first_poinc}) in order to assess the total variation distance between the distribution of $f(N)$ and the distribution of a Gaussian random variable with matching mean and variance. 

\begin{theorem}
[\citet{Cha09}]
For any $d\geq1$, let $X\sim \mathcal{N}\left(0, I_{d \times d}\right)$. Consider $f\in C^{2}(\mathbb{R}^{d})$ such that $\nabla f$ and $\nabla^{2}f$, and denote the gradient of $f$ and Hessian of $f$, respectively. Further, suppose that $f(X)$ has a finite fourth moment, and let $\mu=\mathbb{E}[f(X)]$ and $\sigma^{2}=\operatorname{Var}[f(X)]$. If $N\sim \mathcal{N}(\mu,\sigma^{2})$ then
\begin{equation}\label{chat}
d_{T V}(f(X), N)\leq\frac{2\sqrt{5}}{\sigma^{2}}\left\{\mathbb{E}\left[\lVert\nabla f(X)\rVert^{4}_{\mathbb{R}^{d}}\right]\right\}^{1/4}\left\{\mathbb{E}\left[\lVert\nabla^{2} f(X)\rVert^{4}_{2}\right]\right\}^{1/4},
\end{equation}
where $\lVert\cdot\rVert_{2}$ stands for the operator norm of the Hessian $\nabla^{2}f(X)$ regarded as a random $d\times d$ matrix.
\end{theorem}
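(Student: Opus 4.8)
The plan is to follow Chatterjee's original strategy, which combines Stein's method for the total variation distance with an integration-by-parts identity along the Ornstein--Uhlenbeck (OU) semigroup, and then closes the argument by a \emph{second} application of the Gaussian Poincar\'e inequality \eqref{first_poinc} --- the feature that gives the result its name. Throughout I would write $W = f(X) - \mu$, so that $W$ is centred with $\operatorname{Var}[W] = \sigma^2$, and the target becomes $N - \mu \sim \mathcal N(0,\sigma^2)$.

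First I would invoke the Stein characterization of the Gaussian law. For any bounded test function $h$ there is a solution $\psi = \psi_h$ of the Stein equation $\sigma^2 \psi'(w) - w \psi(w) = h(w) - \mathbb E[h(N-\mu)]$, and the classical Stein estimates give $\lVert \psi' \rVert_\infty \le c/\sigma^2$ for an explicit constant $c$. Taking the supremum over $h$ running over indicators of Borel sets reduces the problem to controlling $\sup_\psi |\mathbb E[\sigma^2 \psi'(W) - W \psi(W)]|$ over this class of solutions.

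The central step is to rewrite $\mathbb E[W\psi(W)]$ by integration by parts. Using the OU interpolation $X_t = e^{-t} X + \sqrt{1-e^{-2t}}\, X'$ with $X'$ an independent copy of $X$, Gaussian integration by parts yields the covariance representation $\mathbb E[W \psi(W)] = \mathbb E[\psi'(W)\, g(X)]$, where $g(X) = \int_0^\infty e^{-t} \langle \nabla f(X),\, \mathbb E'[\nabla f(X_t)]\rangle\, dt$ and $\mathbb E'$ integrates over $X'$ only. Choosing $\psi(w) = w$ shows $\mathbb E[g(X)] = \operatorname{Var}[f(X)] = \sigma^2$, so the Stein quantity becomes $\mathbb E[\sigma^2 \psi'(W) - W\psi(W)] = \mathbb E[\psi'(W)(\sigma^2 - g(X))]$. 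Bounding this by $\lVert \psi'\rVert_\infty\, \mathbb E|g(X) - \mathbb E[g(X)]| \le \lVert\psi'\rVert_\infty \sqrt{\operatorname{Var}[g(X)]}$ leaves the single task of estimating $\operatorname{Var}[g(X)]$.

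This is where the second Poincar\'e inequality enters and where the real work lies. Applying \eqref{first_poinc} to $g$ gives $\operatorname{Var}[g(X)] \le \mathbb E[\lVert \nabla g(X)\rVert^2]$, and differentiating under the integral sign produces terms of the form $\nabla^2 f(X)\, \mathbb E'[\nabla f(X_t)]$ and $e^{-t} \mathbb E'[\nabla^2 f(X_t)]\, \nabla f(X)$. Estimating these by the operator norm of the Hessian acting on the gradient, then applying Cauchy--Schwarz twice together with Jensen's inequality to move the expectation inside the $t$-integral, yields a bound of the form $\operatorname{Var}[g(X)] \le C\, (\mathbb E[\lVert \nabla f(X)\rVert^4])^{1/2} (\mathbb E[\lVert \nabla^2 f(X)\rVert_{op}^4])^{1/2}$. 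I expect the main obstacle to be precisely this variance estimate: one must differentiate the semigroup integral cleanly, control the interpolated Hessian uniformly in $t$ while retaining integrability, and track every constant through the two Cauchy--Schwarz steps and the Stein factor so as to land on the sharp prefactor $2\sqrt 5/\sigma^2$. Assembling these steps then delivers \eqref{chat}.
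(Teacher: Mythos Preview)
The paper does not prove this theorem; it is quoted as a background result from \citet{Cha09} in the preliminaries section, with no proof supplied. So there is nothing in the paper to compare your proposal against.

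That said, your outline is precisely Chatterjee's original argument and is correct in its architecture: Stein's equation reduces the total variation distance to controlling $\mathbb{E}[\psi'(W)(\sigma^2 - g(X))]$; the OU covariance representation produces the random quantity $g(X)$ with mean $\sigma^2$; and the \emph{second} application of the Gaussian Poincar\'e inequality to $g$ is what converts $\operatorname{Var}[g(X)]$ into an expression involving $\nabla^2 f$ and $\nabla f$. The only place to be careful is the constant: the Stein bound for total variation gives $\lVert\psi'\rVert_\infty \le 2/\sigma^2$, and the variance computation for $g$ must be done sharply enough that the remaining factor is $\sqrt{5}$ rather than something looser. In Chatterjee's paper this comes from bounding the two Hessian-gradient cross terms and the $t$-integral $\int_0^\infty e^{-t}\,dt$ together with the interpolation weights; if you track those constants carefully you will recover $2\sqrt{5}/\sigma^2$.
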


By combining Stein's method and Malliavin calculus, \citet{Nou09} obtained a more general version of (\ref{chat}), involving functionals of arbitrary infinite-dimensional Gaussian fields. Both (\ref{chat}) and its generalization in \citet{Nou09} are known to provide estimates of the approximations error that are not tight. This is because, in general, it is not possible to compute explicitly the expected value of the operator norm involved in the estimate of total variation distance, which leads to move further away from the distance in distribution, and further bound the operator norm. To overcome this drawback, the work of \citet{Vid20} adapted to the Gaussian setting an approach recently developed in \citet{Las16} to obtain second-order Poincaré inequalities for Gaussian approximation of Poisson functionals, which yields to estimates of the approximation error that are tight. 

\begin{theorem}[\citet{Vid20} - $1$-dimensional case]\label{theo:2.2}
For any $d\geq1$, let $X\sim \mathcal{N}\left(0, I_{d \times d}\right)$. Consider $f\in C^{2}(\mathbb{R}^{d})$ such that the partial derivatives of the function $f$ have sub-exponential growth, let $F=f(X)$ such that $E[F]=0$ and $E\left[F^2\right]=\sigma^2$, and denote by $\nabla_i F$ and  $\nabla_{i, \cdot}^2 F$ the  $i$-th element of the gradient of $F$ and the $i$-th row of the Hessian of $F$, respectively. If $N \sim \mathcal{N}\left(0, \sigma^2\right)$ then
\begin{equation}\label{vid1}
d_M(F,N) \leq c_M \sqrt{\sum_{l,m=1}^d \left\{ \mathbb{E}\left[\left(\langle \nabla^2_{l,\cdot}F, \nabla^2_{m,\cdot}F \rangle \right)^2 \right] \right\}^{1/2}\left\{ \mathbb{E}\left[\left( \nabla_{l}F \nabla_{m}F \right)^2 \right] \right\}^{1/2}},
\end{equation}
 where $\langle \cdot, \cdot \rangle$ denotes the scalar product, $M \in\{T V, KS, W_1\}$, $c_{T V}=\frac{4}{\sigma^2}$, $c_{KS}=\frac{2}{\sigma^2}$ and $c_{W_1}=\sqrt{\frac{8}{\sigma^2 \pi}}$.
\end{theorem}

The next theorem generalizes Theorem \ref{theo:2.2} to multidimensional ($p\geq1$) functionals of Gaussian random variables. We refer to Appendix \ref{app1} for a more detailed overview of the results of \citet{Vid20}

\begin{theorem}[\citet{Vid20} - $p$-dimensional case]\label{theo:2.3}
For any $d\geq1$, let $X\sim \mathcal{N}\left(0, I_{d \times d}\right)$. For any $p\geq1$ consider $f_{1},\ldots,f_{p}\in C^2(\mathbb{R}^d)$ such that the partial derivatives of $f_{i}$ have sub-exponential growth, for $i=1,\ldots,p$, let $[F_{1}\;\ldots \; F_{p}]=[f_1(X) \; \ldots\;  f_p(X)]$ such that  $E\left[F_i\right]=0$ for $i = 1, \ldots, p$ and $E\left[F_i F_j\right]=c_{i j}$ for $i,j=1,\ldots,p$, with $C=\left\{c_{i j}\right\}_{i, j=1, \ldots, p}$ being a symmetric and positive definite matrix, i.e. a variance-covariance matrix, and denote by $\nabla_i F$ and  $\nabla_{i, \cdot}^2 F$ the $i$-th element of the gradient of $F$ and the $i$-th row of the Hessian of $F$, respectively. If $N \sim \mathcal{N}(0, C)$, then
\begin{align}\label{vid2}
    &d_{W_1}(F, N)\leq 2\sqrt{p}\left\|C^{-1}\right\|_{2}\|C\|_{2} \nonumber \\
    & \quad \quad  \hspace{50pt} \times \sqrt{\sum_{i,k=1}^p \sum_{l,m=1}^d \left\{ \mathbb{E}\left[\left(\langle \nabla^2_{l,\cdot}F_i, \nabla^2_{m,\cdot}F_i \rangle \right)^2 \right] \right\}^{1/2}\left\{ \mathbb{E}\left[\left( \nabla_{l}F_k \nabla_{m}F_k \right)^2 \right] \right\}^{1/2}}
\end{align}
 where $\left\| \cdot \right\|_{2}$ is the spectral norm of a matrix.
\end{theorem}

%%%%%%%%%%%%%%%%%%%%%%%%%%%%%%%%
%%%%%%%%%%%%%%%%%%%%%%%%%%%%%%%%
%%%%%%%%%%%%%%%%%%%%%%%%%%%%%%%%
%%%%%%%%%%%%%%%%%%%%%%%%%%%%%%%%

\section{QCLTs for shallow NNs}\label{sec:shallow}

We make use of Theorem \ref{theo:2.2} and Theorem \ref{theo:2.3} to obtain QCLTs for $F=f^{L+1}(X,n)$, with $f^{L+1}(X,n)$ defined in (\ref{eq:output}), with $L=1$. We start with a $1$-dimensional unitary input, i.e. $d=1$ and $x=1$, unit variance's weight, i.e. $\sigma^{2}_{w}=1$, and no biases, i.e. $b_{i}^{(0)}=b=0$ for any $i\geq1$. That is, we consider
\begin{align}\label{slnn}
    F = \frac{1}{n^{1/2}} \sum_{j=1}^n w_j \tau(w_j^{(0)}).
\end{align}
By a straightforward calculation, $\mathbb{E}[F]= 0$ and  $\operatorname{Var}[F] = \mathbb{E}_{Z \sim \mathcal{N}(0,1)}[\tau^2(Z)]$. Since $F$ is a function of independent standard Gaussian random variables, Theorem \ref{theo:2.2} can be applied to approximate $F$ with a Gaussian random variable with the same mean and variance as $F$, quantifying the approximation error.

\begin{theorem}\label{theo:slnn}
Let $F$ in (\ref{slnn}) with $\tau \in C^2(\mathbb{R})$ such that $|\tau(x)|\leq \alpha+\beta|x|^\gamma$ and $\left|\frac{\text{d}^{l}}{\text{d}x^{l}}\tau(x)\right|\leq \alpha+\beta|x|^\gamma$ for $l=1,2$ and some $\alpha,\beta,\gamma \geq 0$. If $N \sim\mathcal{N}(0,\sigma^{2})$ with $\sigma^{2}=\mathbb{E}_{Z \sim \mathcal{N}(0,1)}[\tau^2(Z)]$, then for any $n\geq1$
\begin{equation}\label{est_slnn}
d_M\left(F, N\right) \leq \frac{c_M}{\sqrt{n}}\sqrt{3(1+\sqrt{2})} \cdot \norm{\alpha+ \beta|Z|^{\gamma}}_{L_4}^2,
\end{equation}
 where $Z \sim \mathcal{N}(0,1)$, $M \in\{TV, KS, W_1\}$, with constants $c_{TV}=4/\sigma^2$, $c_{KS}=2/\sigma^2$, and $c_{W_1}=\sqrt{8/\sigma^2 \pi}$.
\end{theorem}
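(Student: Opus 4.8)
The plan is to apply the one-dimensional second-order Poincar\'e inequality of Theorem \ref{theo:2.2} directly to $F$, viewed as a $C^2$ function of the standard Gaussian vector collecting all the weights. Since in this setting $\sigma_w^2 = 1$ and the biases vanish, every weight is an independent standard Gaussian, so I would set $X = (w_1, \dots, w_n, w_1^{(0)}, \dots, w_n^{(0)}) \sim \mathcal{N}(0, I_{2n \times 2n})$ and write $F = f(X)$ with ambient dimension $d = 2n$. The polynomial envelope hypothesis on $\tau, \tau', \tau''$ makes the partial derivatives sub-exponential and guarantees the moment conditions needed to invoke Theorem \ref{theo:2.2}. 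As $\mathbb{E}[F] = 0$ and $\operatorname{Var}[F] = \sigma^2$ have already been recorded, the whole proof reduces to bounding the double sum on the right-hand side of \eqref{vid1}.

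First I would compute the gradient and Hessian explicitly. The only nonzero first partials are $\partial_{w_j} F = n^{-1/2}\tau(w_j^{(0)})$ and $\partial_{w_j^{(0)}} F = n^{-1/2} w_j \tau'(w_j^{(0)})$, and the Hessian is extremely sparse: its only nonvanishing entries are $\partial_{w_j}\partial_{w_j^{(0)}} F = n^{-1/2}\tau'(w_j^{(0)})$ and $\partial_{w_j^{(0)}}^2 F = n^{-1/2} w_j \tau''(w_j^{(0)})$, both diagonal in the neuron index $j$. In block form the Hessian has a vanishing top-left block and two diagonal blocks coupling $w_j$ to $w_j^{(0)}$.

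The decisive observation, which is what makes the estimate clean, is that this sparsity forces $\langle \nabla^2_{l,\cdot}F, \nabla^2_{m,\cdot}F\rangle$ to vanish unless the indices $l$ and $m$ both belong to the same neuron, i.e. $l,m \in \{j, n+j\}$ for a common $j$. Hence the double sum over $l,m \in \{1, \dots, 2n\}$ in \eqref{vid1} collapses to a single sum over $j = 1, \dots, n$ with only four surviving index pairs: $(j,j)$, $(j, n+j)$, $(n+j, j)$, and $(n+j, n+j)$. For each of these I would evaluate the two expectations in \eqref{vid1} by using the independence of $w_j$ and $w_j^{(0)}$ to factor each expectation, the identities $\mathbb{E}[w_j^2] = 1$ and $\mathbb{E}[w_j^4] = 3$, and the envelope bound $|\tau^{(l)}(x)| \le a + b|x|^\gamma$ to replace every factor involving $\tau,\tau',\tau''$ by $\|a + b|Z|^\gamma\|_{L_4}^4$. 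A short computation then shows each neuron contributes at most $n^{-2}\,(3 + 3\sqrt{2})\,\|a + b|Z|^\gamma\|_{L_4}^4$, where the $3\sqrt{2}$ arises from the $(n+j,n+j)$ pair, in which the fourth Gaussian moment enters through both the Hessian and the gradient factors.

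Summing over the $n$ neurons produces a factor $n \cdot n^{-2} = n^{-1}$ inside the square root, and extracting the root gives exactly $\frac{c_M}{\sqrt{n}}\sqrt{3(1+\sqrt{2})}\,\|a + b|Z|^\gamma\|_{L_4}^2$, as claimed, for each $M \in \{TV, KS, W_1\}$ with the stated constants $c_M$. I do not anticipate a genuine obstacle: the argument is a direct, if careful, computation once Theorem \ref{theo:2.2} is in hand. The one point demanding attention is the bookkeeping of the sparse Hessian, namely correctly identifying which of the $(2n)^2$ index pairs survive and tracking the even-moment constants $1$ and $3$, since an error there would corrupt the final constant $3(1+\sqrt{2})$ even though it would leave the optimal rate $n^{-1/2}$ intact.
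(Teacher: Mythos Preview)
Your proposal is correct and follows essentially the same route as the paper: both apply Theorem~\ref{theo:2.2} to $F$ as a function of the $2n$ independent standard Gaussians, compute the same sparse gradient and Hessian, observe that the double sum in \eqref{vid1} collapses to a single sum over neurons with the four index pairs you list, and then use $\mathbb{E}[w_j^2]=1$, $\mathbb{E}[w_j^4]=3$ together with the polynomial envelope to extract the constant $3(1+\sqrt{2})$. Your identification of the $(n+j,n+j)$ pair as the source of the $3\sqrt{2}$ contribution matches the paper's computation exactly.
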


See Appendix \ref{teo1} for the proof of Theorem \ref{theo:slnn}. The proof follows by an application of Theorem \ref{theo:2.2}, reducing the problem of establishing the QCLT to the algebraic problem of computing the gradient and the Hessian of the NN. The QCLT (\ref{est_slnn}) has the convergence rate $n^{-1/2}$ with respect to the $1$-Wasserstein distance, the total variation distance and the Kolmogorov-Smirnov distance. The rate $n^{-1/2}$ is also obtained in \citet{trevisan22} and \citet{Apollo23}, through different techniques, whereas \cite{Peccati23} proved a QCLT with rate $n^{-1}$, also proving the optimality of such a rate. As for the constant in (\ref{est_slnn}), it depends on $\mathbb{E}_{Z \sim \mathcal{N}(0,1)}[\tau^2(Z)]$, which can be evaluated exactly or approximated once $\tau$ is specified. Theorem \ref{theo:slnn} can be extended to an input $\xv\in \mathbb{R}^d$, showing that the problem of QCLT still reduces to the application of Theorem \ref{theo:2.2}. In particular, we can write
\begin{align}\label{slnn2}
    F:=\frac{1}{n^{1/2}} \sigma_w \sum_{j=1}^n w_j \tau(\sigma_w\langle w_j^{(0)}, \boldsymbol{x}\rangle+\sigma_b b_j^{(0)})+\sigma_b b,
\end{align}
with $w_j^{(0)}=[w_{j, 1}^{(0)}, \ldots, w_{j, d}^{(0)}]^T$ and $w_j \stackrel{d}{=} w_{j, i}^{(0)} \stackrel{\text { iid }}{\sim}\mathcal{N}(0,1)$. We set $\Gamma^2 = \sigma_w^2 \norm{\boldsymbol{x}}^2 + \sigma_b^2$, and for $n\geq1$ we consider a collection $(Y_{1},\ldots,Y_{n})$ of independent standard Gaussian random variables. Then, from (\ref{slnn2})
\begin{displaymath}
F \overset{d}{=}\frac{1}{n^{1/2}} \sigma_w \sum_{j=1}^n w_j \tau\left(\Gamma  Y_j \right)+\sigma_b  b .
\end{displaymath}
As before, $\mathbb{E}[F]=0$ and $\operatorname{Var}[F]=\sigma_w^2 \mathbb{E}_{Z \sim\mathcal{N}(0,1)}\left[\tau^2\left(\Gamma Z\right)\right] + \sigma_b^2$. Since $F$ in (\ref{slnn2}) is a function of independent standard Gaussian random variables, Theorem \ref{theo:2.2} can be applied to approximate $F$ with a Gaussian random variable with the same mean and variance as $F$, quantifying the approximation error.

\begin{theorem}\label{theo:slnn2}
Let $F$ in (\ref{slnn2}) with $\tau \in C^2(\mathbb{R})$ such that $|\tau(x)|\leq \alpha+\beta|x|^\gamma$ and $\left|\frac{\text{d}^{l}}{\text{d}x^{l}}\tau(x)\right|\leq \alpha+\beta|x|^\gamma$ for $l=1,2$ and some $\alpha,\beta,\gamma \geq 0$. If $N \sim\mathcal{N}(0,\sigma^{2})$ with $\sigma^2 = \sigma_w^2 \mathbb{E}_{Z \sim \mathcal{N}(0,1)}\left[\tau^2\left(\Gamma Z\right)\right] + \sigma_b^2$ and $\Gamma=(\sigma_w^2\norm{\boldsymbol{x}}^2+\sigma_b^2)^{1/2}$, then for any $n\geq1$
\begin{equation}\label{est_slnn2}
d_M\left(F, N\right) \leq \frac{c_M \sqrt{\Gamma^2 + \Gamma^4(2 + \sqrt{3(1+ 2\Gamma^2 + 3\Gamma^4)}) }\norm{\alpha + \beta|\Gamma Z|^\gamma}_{L^4}^2}{\sqrt{n}},
\end{equation}
where $Z \sim \mathcal{N}(0,1)$, $M \in\{TV, KS, W_1\}$, with constants $c_{TV}=4/\sigma^2, c_{KS}=2/\sigma^2, c_{W_1}=\sqrt{8/\sigma^2 \pi}$.
\end{theorem}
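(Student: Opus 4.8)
The plan is to apply the one-dimensional second-order Poincaré inequality of Theorem~\ref{theo:2.2} to $F$ directly, mirroring the proof of Theorem~\ref{theo:slnn}. Starting from the representation $F \overset{d}{=} \frac{\sigma_w}{\sqrt{n}}\sum_{j=1}^n w_j\,\tau(\Gamma Y_j)+\sigma_b b$ recorded just before the statement, I regard $F$ as a smooth function of the independent standard Gaussians $(w_1,\dots,w_n,Y_1,\dots,Y_n,b)$. Because each $d_M$ depends only on the law of $F$, applying Theorem~\ref{theo:2.2} to this representation yields a valid bound on $d_M(F,N)$; the payoff is that the $d$-dimensional inner product $\langle w_j^{(0)},\boldsymbol{x}\rangle$ has been collapsed into the single coordinate $Y_j$, so the computation is essentially one-dimensional per neuron. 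I also record at the outset that the bias enters linearly, so $\partial F/\partial b=\sigma_b$ is constant and the entire Hessian row indexed by $b$ vanishes; consequently $b$ contributes nothing to the right-hand side of \eqref{vid1}.

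Next I would compute the derivatives. The chain rule gives $\partial F/\partial w_j=\sigma_w n^{-1/2}\tau(\Gamma Y_j)$ and $\partial F/\partial Y_j=\sigma_w\Gamma n^{-1/2}w_j\tau'(\Gamma Y_j)$, while the only nonzero second derivatives are $\nabla^2_{w_j,Y_j}F=\sigma_w\Gamma n^{-1/2}\tau'(\Gamma Y_j)$ and $\nabla^2_{Y_j,Y_j}F=\sigma_w\Gamma^2 n^{-1/2}w_j\tau''(\Gamma Y_j)$. The structural key is that the Hessian is block diagonal across neurons: the rows indexed by $w_j$ and $Y_j$ are supported on the columns $w_j,Y_j$ only. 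Hence in the double sum of \eqref{vid1} every cross-neuron pair has a vanishing Hessian inner product, and the sum collapses to a sum over $j$ of three surviving contributions, namely the diagonal pairs $(w_j,w_j)$ and $(Y_j,Y_j)$ and the off-diagonal pair $(w_j,Y_j)$ counted twice. This reproduces the three-term structure of the proof of Theorem~\ref{theo:slnn}, now carrying the extra powers of $\Gamma$ produced by the chain rule.

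I would then evaluate the expectations using the independence of $w_j$ and $Y_j$ and the Gaussian moments $\mathbb{E}[w_j^2]=1$ and $\mathbb{E}[w_j^4]=3$; the latter is what produces the factor $3$ inside the radical. Writing $Z\sim\mathcal{N}(0,1)$ and bounding each of $|\tau(\Gamma Z)|,|\tau'(\Gamma Z)|,|\tau''(\Gamma Z)|$ by $a+b|\Gamma Z|^\gamma$ (applying Cauchy--Schwarz to mixed products such as $\tau'(\Gamma Z)\tau''(\Gamma Z)$), every $\tau$-moment is dominated by $\mathbb{E}[(a+b|\Gamma Z|^\gamma)^4]$, which factors out as $\|a+b|\Gamma Z|^\gamma\|_{L^4}^2$ once the outer square root is taken. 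Collecting the $\Gamma$-dependent prefactors of the three terms—$\Gamma^2$ from $(w_j,w_j)$, $\Gamma^4$ from $(w_j,Y_j)$, and $\Gamma^4$ together with the expansion $\mathbb{E}[(\{\tau'\}^2+\Gamma^2 w_j^2\{\tau''\}^2)^2]$ from $(Y_j,Y_j)$—assembles the radical $\sqrt{\Gamma^2+\Gamma^4(2+\sqrt{3(1+2\Gamma^2+3\Gamma^4)})}$ of \eqref{est_slnn2}, while the common prefactor $(\sigma_w n^{-1/2})^4$ carried by each of the $n$ identically distributed summands combines with the outer square root to give the $n^{-1/2}$ rate.

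The main obstacle is the bookkeeping in the $(Y_j,Y_j)$ term: its Hessian row carries simultaneously a $\tau'$ entry (in column $w_j$) and a $\tau''$ entry (in column $Y_j$), so the squared row norm expands into three pieces with different powers of $\Gamma$ and, crucially, a $w_j^4$ factor whose expectation is $3$; keeping these powers straight through the half- and quarter-power structure of \eqref{vid1} is where care is needed, though it becomes routine once the block-diagonal reduction is in place. As a final sanity check I would specialize to $\sigma_w=1$, $\|\boldsymbol{x}\|=1$, $\sigma_b=0$, which forces $\Gamma=1$ and must collapse the radical to $\sqrt{3+\sqrt{18}}=\sqrt{3(1+\sqrt{2})}$, recovering the constant of Theorem~\ref{theo:slnn}.
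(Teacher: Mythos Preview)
Your proposal is correct and follows essentially the same approach as the paper's proof: both pass to the one-dimensional-per-neuron representation $\tilde F=\sigma_w n^{-1/2}\sum_j w_j\tau(\Gamma Y_j)+\sigma_b b$, observe that the $b$-row of the Hessian vanishes, exploit the block-diagonal Hessian structure to reduce \eqref{vid1} to the same three per-neuron terms, and then use $\mathbb{E}[w_j^2]=1$, $\mathbb{E}[w_j^4]=3$ together with the polynomial envelope to assemble the radical in \eqref{est_slnn2}. The only cosmetic difference is that the paper bounds mixed moments such as $\mathbb{E}[|\tau'(\Gamma Z)|^2|\tau''(\Gamma Z)|^2]$ directly by $\mathbb{E}[(a+b|\Gamma Z|^\gamma)^4]$ rather than via Cauchy--Schwarz, and your $\Gamma=1$ sanity check is a nice addition not spelled out in the paper.
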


See Appendix \ref{teo2} for the proof of Theorem \ref{theo:slnn2}. We conclude by extending Theorem \ref{theo:slnn2} to $p>1$ inputs $(\xv_1,\dots,\xv_p)$, where $\boldsymbol{x_i}\in\mathbb{R}^{d}$ for $i=1,\ldots,p$. In particular, we consider 
$F= [F_1 \;\ldots\; F_p]$, where
\begin{align}\label{slnn_multi}
F_{i}:= \frac{1}{n^{1/2}} \sigma_w \sum_{j=1}^n w_j \tau(\sigma_w \langle w_j^{(0)}, \boldsymbol{x_i}\rangle+\sigma_b b_j^{(0)})+\sigma_b b,
\end{align}
with $w_j^{(0)}=[w_{j, 1}^{(0)}, \ldots, w_{j, d}^{(0)}]^T$ and $w_j \stackrel{d}{=} w_{j, i}^{(0)} \stackrel{d}{=} b_j^{(0)} \stackrel{d}{=} b \stackrel{\text { iid }}{\sim}\mathcal{N}(0,1)$. Under this setting, Theorem \ref{theo:2.3} can be applied to obtain an approximation of $F$ with a Gaussian random vector whose mean and covariance are the same as $F$. The resulting estimate depends on the maximum and the minimum eigenvalues, i.e. $\lambda_1(C)$ and $\lambda_p(C)$ respectively, of the covariance matrix $C$, whose $(i,k)$-th entry is given by
\begin{equation}\label{cov_matrix}
\mathbb{E}[F_i F_k] = \sigma_w^2 \mathbb E[\tau(Y_i) \tau(Y_k)] + \sigma_b^2,
\end{equation}
where 
\begin{displaymath}
Y \sim \mathcal{N}(0, \sigma_w^2\mathbf X^T\mathbf X + \sigma_b^2 \mathbf{1} \mathbf{1}^T)
\end{displaymath}
with $\mathbf{1}$ being the all-one vector of dimension $p$, and with $\mathbf X$ being the $n \times p$ matrix of the inputs $\{\boldsymbol{x_i}\}_{i \in [p]}$.

\begin{theorem}\label{theo:slnn_multi}
Let $F= [F_1 \;\ldots \; F_p]$ with $F_{i}$ being the NN output  in (\ref{slnn_multi}), for $i=1,\ldots,p$,   with $\tau \in C^2(\mathbb{R})$ such that $|\tau(x)|\leq \alpha+\beta|x|^\gamma$ and $\left|\frac{\text{d}^{l}}{\text{d}x^{l}}\tau(x)\right|\leq \alpha+\beta|x|^\gamma$ for $l=1,2$ and some $\alpha,\beta,\gamma \geq 0$. Furthermore, let $C$ be the covariance matrix of $F$, whose entries are given in (\ref{cov_matrix}), and define $\Gamma_i^2 = \sigma_w^2 ||\boldsymbol{x_i}||^2 + \sigma_b^2$ and $\Gamma_{ik} = \sigma_w^2 \sum_{j =1}^d|x_{ij}x_{kj}| + \sigma_b^2$. If $N = [N_1\; \cdots \; N_p] \sim \mathcal{N}(0,C)$, then for any $n\geq1$
\begin{equation}\label{est_slnn3}
d_{W_1} \left(F, N\right) \leq 2 \sigma_w^2 \Tilde{K} \frac{\lambda_1(C)}{\lambda_p(C)} \sqrt{\frac{p}{n}},
\end{equation}
where $\lambda_1(C)$ and $\lambda_p(C)$ are the maximum and the minimum eigenvalues of $C$, respectively, and where
\begin{align*}
   \Tilde{K} = \bigg\{\sum_{i,k = 1}^p (\Gamma_i^2 + \sqrt{3(1+2\Gamma_i^2 + 3\Gamma_i^4)}\Gamma_{ik}^2 + 2\Gamma_i^2 \Gamma_{ik})\norm{\alpha + \beta|\Gamma_i Z|^{\gamma}}_{L^4}^2 \norm{\alpha + \beta|\Gamma_k Z|^{\gamma}}_{L^4}^2\bigg\}^{1/2},
\end{align*}
with $Z\sim \mathcal{N}(0,1)$.
\end{theorem}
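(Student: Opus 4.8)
The plan is to apply the $p$-dimensional second-order Poincaré inequality of Theorem~\ref{theo:2.3} to $F=[F_1,\dots,F_p]^T$, regarded as a $C^2$ map of the standard Gaussian vector that stacks all the weights $\{w_j,\,w_{j,l}^{(0)},\,b_j^{(0)}\}_{j\in[n],\,l\in[d]}$ together with $b$, an element of $\mathbb{R}^{n(d+2)+1}$. Since $C$ is symmetric and positive definite, $\|C^{-1}\|_2=\lambda_p(C)^{-1}$ and $\|C\|_2=\lambda_1(C)$, so the spectral prefactor $2\sqrt{p}\,\|C^{-1}\|_2\|C\|_2$ of \eqref{vid2} is exactly $2\sqrt{p}\,\lambda_1(C)/\lambda_p(C)$; this already produces the eigenvalue ratio and the $\sqrt{p}$ in \eqref{est_slnn3}. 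It then remains to show that the double sum under the square root in \eqref{vid2} is bounded by $\sigma_w^4\,\tilde{K}^2/n$.

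First I would compute the first- and second-order partial derivatives of each $F_i$ with respect to all the weights, writing $z_{ij}=\sigma_w\langle w_j^{(0)},\boldsymbol{x_i}\rangle+\sigma_b b_j^{(0)}$, so that $z_{ij}\overset{d}{=}\Gamma_i Z$ with $Z\sim\mathcal{N}(0,1)$ and $\Gamma_i^2=\sigma_w^2\|\boldsymbol{x_i}\|^2+\sigma_b^2$. The gradient entries are $\partial_{w_j}F_i=n^{-1/2}\sigma_w\tau(z_{ij})$, $\partial_{w_{j,l}^{(0)}}F_i=n^{-1/2}\sigma_w^2 x_{i,l}\,w_j\tau'(z_{ij})$, $\partial_{b_j^{(0)}}F_i=n^{-1/2}\sigma_w\sigma_b\,w_j\tau'(z_{ij})$ and $\partial_b F_i=\sigma_b$. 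The decisive structural fact is that the Hessian of $F_i$ is block-diagonal across neurons and carries no entry in the $b$-row or $b$-column, since $F_i$ is linear in $b$ and each summand depends only on the weights of a single neuron. Hence $\langle\nabla^2_{l,\cdot}F_i,\nabla^2_{m,\cdot}F_i\rangle$ vanishes unless coordinates $l$ and $m$ belong to the same neuron, which collapses the coordinate double sum $\sum_{l,m}$ in \eqref{vid2} into a sum over neurons $j$ and over within-neuron coordinate pairs.

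Next I would use the iid structure across neurons: after taking expectations the $n$ neurons contribute identically, and since each surviving summand carries a factor $n^{-2}$ (every gradient and Hessian entry scales as $n^{-1/2}$), the neuron sum yields $n\cdot n^{-2}=n^{-1}$, so the outer square root gives the claimed $n^{-1/2}$ rate. Within a neuron the terms organize into three groups mirroring the three summands of Theorem~\ref{theo:slnn2}: the output-weight/input-weight cross terms, the output-weight diagonal block, and the input-weight/bias diagonal block. Expanding the inner products of Hessian rows of $F_i$ and the gradient products of $F_k$ produces geometric coefficients such as $\sum_l x_{i,l}x_{k,l}$ and $\sum_{l,m}x_{i,l}x_{i,m}x_{k,l}x_{k,m}$; bounding these by their absolute values through $\sum_{l,m}|x_{i,l}x_{i,m}x_{k,l}x_{k,m}|=\big(\sum_l|x_{i,l}x_{k,l}|\big)^2$ and $\sum_l|x_{i,l}x_{k,l}|$ is precisely what converts the raw inner products into $\Gamma_{ik}=\sigma_w^2\sum_{l}|x_{i,l}x_{k,l}|+\sigma_b^2$. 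Finally, bounding each factor $|\tau^{(r)}(z_{ij})|\le a+b|z_{ij}|^\gamma$ pointwise and using the marginals $z_{ij}\overset{d}{=}\Gamma_i Z$, $z_{kj}\overset{d}{=}\Gamma_k Z$ together with the Gaussian moments $\mathbb{E}[w_j^2]=1$ and $\mathbb{E}[w_j^4]=3$, every relevant expectation is controlled by $\norm{a+b|\Gamma_i Z|^\gamma}_{L^4}^2\,\norm{a+b|\Gamma_k Z|^\gamma}_{L^4}^2$, and the three groups assemble into the $(i,k)$ summand $(\Gamma_i^2+\sqrt{3(1+2\Gamma_i^2+3\Gamma_i^4)}\,\Gamma_{ik}^2+2\Gamma_i^2\Gamma_{ik})$ of $\tilde{K}$.

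I expect the main obstacle to be the bookkeeping of the within-neuron inner products $\langle\nabla^2_{l,\cdot}F_i,\nabla^2_{m,\cdot}F_i\rangle$ across the $d$ input coordinates and the bias coordinate $b_j^{(0)}$, and their correct pairing with the gradient products $\nabla_l F_k\nabla_m F_k$: unlike the one-dimensional setting, the Hessian of $F_i$ and the gradient of $F_k$ are built from distinct pre-activations $z_{ij}$ and $z_{kj}$, so the moments factor into separate $L^4$ norms at scales $\Gamma_i$ and $\Gamma_k$, and one must keep precise track of the mixed geometric factors, the powers of $\sigma_w,\sigma_b$ that must collapse into $\Gamma_i^2,\Gamma_{ik}$ and an overall $\sigma_w^2$, and the signs that force the absolute values defining $\Gamma_{ik}$. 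A useful consistency check is that setting $p=1$ and $\boldsymbol{x_1}=\boldsymbol{x}$ collapses $\Gamma_{ik}$ to $\Gamma^2$, so the $(i,k)$ summand reduces to the radicand $\Gamma^2+\Gamma^4(2+\sqrt{3(1+2\Gamma^2+3\Gamma^4)})$ of \eqref{est_slnn2}.
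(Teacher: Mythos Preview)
Your proposal is correct and follows essentially the same route as the paper: apply Theorem~\ref{theo:2.3}, exploit the block-diagonal Hessian across neurons and the vanishing $b$-row to collapse to a single neuron with the $n^{-1/2}$ factor, split the within-neuron contribution into the same three groups, and then bound via the polynomial envelope and Gaussian moments to assemble the $(i,k)$ summand of $\tilde K$. The only cosmetic difference is that the paper packages the bias coordinate into an augmented input $\boldsymbol{\tilde x}_i=[\sigma_w\boldsymbol{x}_i^T,\sigma_b]^T$ and works with $\tilde F_i=w\,\tau(\langle\tilde w^{(0)},\boldsymbol{\tilde x}_i\rangle)$, so that $\|\boldsymbol{\tilde x}_i\|^2=\Gamma_i^2$ and $\sum_j|\tilde x_{ij}\tilde x_{kj}|=\Gamma_{ik}$ drop out directly, slightly reducing the bookkeeping you flag as the main obstacle.
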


See Appendix \ref{teo3} for the proof of Theorem \ref{theo:slnn_multi}. Along the same lines of the proofs of Theorem \ref{theo:slnn} and Theorem \ref{theo:slnn2}, Theorem \ref{theo:slnn_multi} follows by a direct application of Theorem \ref{theo:2.3}, which boils down to straightforward (algebraic) calculations for the gradient and the Hessian of the NN. The estimate (\ref{est_slnn3}) of the approximation error $d_{W_1}\left(F, N\right)$ has the expected convergence rate $n^{-1/2}$ with respect to the $1$-Wasserstein distance, with a constant depending on the spectral norms of the covariance matrix $C$ and the precision matrix $C^{-1}$. In particular, such spectral norms must be computed explicitly for the specific activation $\tau$ in use, or at least bounded from above, in order to apply Theorem \ref{theo:slnn_multi}. This boils down to finding the greatest eigenvalue $\lambda_1$ and the smallest eigenvalue $\lambda_p$ of the matrix $C$, which can be done for a broad class of activations with classical optimization techniques, or at least bounding $\lambda_1$ from above and $\lambda_p$ from below (\citet{Diaconis,guatti}). Within the setting of Theorem \ref{theo:slnn_multi}, the rate $n^{-1/2}$ is also obtained in \citet{trevisan22} and \citet{Apollo23}, through different techniques, whereas \cite{trevisan23} proved a QCLT with rate $n^{-1}$.

%%%%%%%%%%%%%%%%%%%%%%%%%%%%%%%%
%%%%%%%%%%%%%%%%%%%%%%%%%%%%%%%%
%%%%%%%%%%%%%%%%%%%%%%%%%%%%%%%%
%%%%%%%%%%%%%%%%%%%%%%%%%%%%%%%%

\section{QCLTs for deep NNs}\label{sec:deep}
We consider the use of the second-order Poincar\'e inequalities to obtain QCLTs for the output of a deep Gaussian NN, generalizing Theorem \ref{theo:slnn_multi}. For $L\geq2$ second-order Poincar\'e inequalities may be applied to the NN's output $f^{(L+1)}(\mathbf{X},n)$ along the same lines as for $L=1$.  In particular, the use of second-order Poincar\'e inequalities still relies on the computation of the gradient and the Hessian of the NN's output, which for $L\geq2$ is a non-trivial task due to its (algebraic) complexity that increases with the depth $L$ of the NN. Let $F:=f^{(L+1)}(\mathbf{X},n)$, with $f^{(L+1)}(\mathbf{X},n)$ as in   (\ref{eq:output}). Since $F$ is a function of independent Gaussian random variables, Theorem \ref{theo:2.3}  can be applied to give an upper bound for the 1-Wasserstein distance between $F$ and a Gaussian random vector with the same covariance matrix. See Appendix \ref{sec:derivatives} for explicit expressions of the gradient and the Hessian of the NN output.

\begin{theorem}\label{theo:mutliNN}
        Let $F= [F_1\; \ldots \;  F_p]:=f^{(L+1)}(\mathbf X,n)$ with $f^{(L+1)}(\mathbf X,n)$ being the output of the NN defined in (\ref{eq:output}), and let $C$ be the covariance matrix of $F$. If $N = [N_1 \;\cdots \;  N_p]\sim \mathcal{N}(0,C)$, then for any $n\geq1$ and $L>1$
        \begin{align*}
          & d_{W_1} \left(F,N\right)\leq2\sqrt{p}\frac{\lambda_1(C)}{\lambda_p(C)}\\
          &\quad\times\left\{\sum_{i,k=1}^{p}\sum_{l,m=0}^{L-1}\sum_{i_1,i_2,i_3,i_4=1}^{n}\left\{\mathbb{E}\left[\left(\inner{\hessian{F_i}{w^{(l)}_{i_1, i_2}}{\cdot}}{\hessian{F_k}{w^{(m)}_{i_3, i_4}}{\cdot}}\right)^2\right]\mathbb{E}\left[\left(\derivativ{F_i}{w^{(l)}_{i_1, i_2}}\derivativ{F_k}{w^{(m)}_{i_3, i_4}} \right)^2\right] \right\}^{1/2}\right.\\
          &\quad\quad+2\sum_{i,k=1}^{p}\sum_{l=0}^{L-1}\sum_{i_1,i_3,i_4=1}^{n}\left\{\mathbb{E}\left[\left(\inner{\hessian{F_i}{w_{i_1}}{\cdot}}{\hessian{F_k}{w^{(l)}_{i_3, i_4}}{\cdot}}\right)^2\right]\mathbb{E}\left[\left(\derivativ{F_i}{w_{i_1}}\derivativ{F_k}{w^{(l)}_{i_3, i_4}} \right)^2\right] \right\}^{1/2} \\
          &\quad\quad\left.+\sum_{i,k=1}^{p}\sum_{i_1,i_3=1}^{n}\left\{\mathbb{E}\left[\left(\inner{\hessian{F_i}{w_{i_1}}{\cdot}}{\hessian{F_k}{w_{i_3}}{\cdot}}\right)^2\right]\mathbb{E}\left[\left(\derivativ{F_i}{w_{i_1}}\derivativ{F_k}{w_{i_3}} \right)^2\right] \right\}^{1/2}\right\}^{1/2}.
        \end{align*}
      \end{theorem}
      
The estimate of $d_{W_1}\left(F,N\right)$ in Theorem \ref{theo:mutliNN} is implicit, as controlling the expectations involving the gradient and the Hessian of the NN is a non-trivial task for a general depth $L\geq2$. If $p=1$, with $\mathbf{X}=\xv$, then
\begin{equation}\label{exe_theo_dept}
 \mathbb{E}\left[\left(\derivativ{F}{w_i}\derivativ{F}{w_j} \right)^2\right] = \left(\frac{\sigma_w}{\sqrt{n}}\right)^4\mathbb{E}\left[\tau\left(f^{(L)}_{i}(\xv,n)\right)^2\tau\left(f^{(L)}_{j}(\xv,n)\right)^2\right].
 \end{equation}
As the random variables on the right-hand side of (\ref{exe_theo_dept}) are dependent, to deal with the expectation one may consider to condition with respect to the output of the previous layer, and then make use the fact that $f^{(L)}_{i}(\xv,n)$ and $f^{(L)}_{j}(\xv,n)$ are conditionally i.i.d. given $f^{(L-1)}_.(\xv,n)$. Then, (\ref{exe_theo_dept}) factorizes as
      \begin{align*}
        &\mathbb{E}\left[\left(\derivativ{F}{w_i}\derivativ{F}{w_j} \right)^2\right]\\
         &\quad= \left(\frac{\sigma_w}{\sqrt{n}}\right)^4\mathbb{E}\left[\tau\left(f^{(L)}_{i}(\xv,n)\right)^2\tau\left(f^{(L)}_{j}(\xv,n)\right)^2\right] \\
        &\quad= \left(\frac{\sigma_w}{\sqrt{n}}\right)^4\mathbb{E}\left[\mathbb{E}\left[\left.\tau\left(f^{(L)}_{i}(\xv,n)\right)^2\tau\left(f^{(L)}_{j}(\xv,n)\right)^2\right\vert f^{(L-1)}_.(\xv,n)\right]\right]  \\
        &\quad\overset{\text{cond. i.i.d.}}{=} \left(\frac{\sigma_w}{\sqrt{n}}\right)^4\mathbb{E}\left[\mathbb{E}\left[\left.\tau\left(f^{(L)}_{i}(\xv,n)\right)^2\right\vert f^{(L-1)}_.(\xv,n)\right]^2\right],
      \end{align*}
     which, however, is not helpful, since the distribution of $f^{(L-1)}_.(\xv,n)$ is not Gaussian. The only exception is the case of two hidden layers, i.e. $L=2$, where the conditioning argument allows to bound the expectations, being the random variable $f^{(L-1)}_.(\xv,n) = f^{(1)}_.(\xv)$ distributed as a Gaussian distribution.
     
We conclude by presenting an application of Theorem \ref{theo:mutliNN} for $L=2$, making more explicit the estimate of $d_{W_1}\left(F,N\right)$. For simplicity, we assume a NN without bias. Given an input $\boldsymbol{x} \in \mathbb{R}^d$, then we write
\begin{equation}\label{def:2_hidden}
       F = \sigma_w n^{-1/2}\sum_{i=1}^n w_{i} \tau \left(\sigma_w n^{-1/2}\sum_{j=1}^n w^{(1)}_{i,j} \tau(\sigma_w \langle w^{(0)}_j,\boldsymbol{x} \rangle_{\mathbb{R}^d})\right).
\end{equation}
As before, $F \overset{d}{=} \tilde{F}$, where 
\[
\tilde{F} := \sigma_w n^{-1/2}\sum_{i=1}^n w_{i} \tau \left(\sigma_w n^{-1/2}\sum_{j=1}^n w^{(1)}_{i,j} \tau(\Gamma Y_j)\right),
\]
with $\Gamma^2 = \sigma_w^2\norm{\xv}_2^2$ and $Y_j \overset{d}{=} w^{(1)}_{j,i} \overset{d}{=} w_j \sim \mathcal{N}(0,1)$ for all $i,j\in\left[n\right]$. The next theorem applies Theorem \ref{theo:mutliNN}.

\begin{theorem}\label{theo:2layerNN}
Let $F$ be the NN output (\ref{def:2_hidden}) with $\tau \in C^2(\mathbb{R})$ such that $|\tau(x)|\leq \alpha+\beta|x|^\gamma$ and $\left|\frac{\text{d}^{l}}{\text{d}x^{l}}\tau(x)\right|\leq \alpha+\beta|x|^\gamma$ for $l=1,2$ and some $\alpha,\beta,\gamma \geq 0$. If $N \sim\mathcal{N}(0,\sigma^{2})$ with $\sigma^{2}=\operatorname{Var}[F]$, then for any $n\geq1$
\begin{equation}\label{est_2_hidden}
d_M\left(F, N\right) \leq c_M \frac{K_1}{\sqrt[4]{n}},
\end{equation}
 where $K_1$ is a constant independent of $n$ and $d$ which depends on some expectations of the standard Gaussian law and can be computed explicitly, and $c_M$ is as in Theorem \ref{theo:slnn}.
\end{theorem}

See Appendix \ref{sec:2hidden} for the proof of Theorem \ref{theo:2layerNN}. Theorem \ref{theo:2layerNN} can be adapted to a NN with $p$ inputs, along the same lines as Theorem \ref{theo:slnn_multi} adapted Theorem \ref{theo:slnn2}. The next theorem is a further application of Theorem \ref{theo:mutliNN}.

\begin{theorem}\label{theo:2_hidden_multi}
Let $F= [F_1 \;\ldots \; F_p]$ with $F_{i}$ being the NN output (\ref{def:2_hidden}), for $i=1,\ldots,p$,   with $\tau \in C^2(\mathbb{R})$ such that $|\tau(x)|\leq \alpha+\beta|x|^\gamma$ and $\left|\frac{\text{d}^{l}}{\text{d}x^{l}}\tau(x)\right|\leq \alpha+\beta|x|^\gamma$ for $l=1,2$ and some $\alpha,\beta,\gamma \geq 0$. Furthermore, let $C$ be the covariance matrix of $F$. If $N = [N_1 \;\ldots\;  N_p] \sim \mathcal{N}(0,C)$, then for any $n\geq1$
\begin{equation}\label{est_slnn3L=2}
d_{W_1} \left(F, N\right) \leq 2 K_p \frac{\lambda_1(C)}{\lambda_p(C)}\sqrt{\frac{p}{\sqrt{n}}}
\end{equation}
where $\lambda_1(C)$ and $\lambda_p(C)$ are the maximum and the minimum eigenvalues of $C$, respectively, and where $K_p$ is a constant independent of $n$ and $d$ which depends on some expectations of the standard Gaussian law and can be computed explicitly.
\end{theorem}

See Appendix \ref{sec:2hidden} for the proof of Theorem \ref{theo:2_hidden_multi}. Theorem \ref{theo:2_hidden_multi} shows how a direct use of second-order Poincar\'e inequalities on the NN's output does not allow to achieve the rate of convergence $n^{-1/2}$ established in \cite{trevisan22}, \cite{Apollo23} and \cite{Peccati23}, which is itself worst than the rate $n^{-1}$ obtained in \cite{trevisan23}. In particular, for linearly-bounded activation functions, the direct use of second-order Gaussian Poincar\'e leads to the rate $\mathcal{O}(\sqrt{p/\sqrt{n}})$, and such a rate can not be improved, since assuming $\tau = id$ leads to the same rate. We refer to Appendix \ref{sec:2hidden} for details. Based on these observations, for a deep Gaussian NN of depth $L\geq1$, we conjecture that Theorem  \ref{theo:mutliNN} leads to the rate of convergence $\mathcal{O}(L\sqrt{p/\sqrt{n}})$, which is worse than the rate of convergence established, for instance, in the work of \cite{trevisan22}, that is $\mathcal{O}(L\sqrt{p/n})$. As we proved for $\tau = id$, there are no chances to avoid this worsening in the rate of convergence when second order Poincar\'e inequalities are applied directly to the NN's output to establish a QCLT.

%%%%%%%%%%%%%%%%%%%%%%%%%%%%%%%%
%%%%%%%%%%%%%%%%%%%%%%%%%%%%%%%%
%%%%%%%%%%%%%%%%%%%%%%%%%%%%%%%%
%%%%%%%%%%%%%%%%%%%%%%%%%%%%%%%%

\section{Discussion}\label{sec:disucssion}

We investigated the use of second-order Poincar\'e inequalities to establish QCLTs for the NN's output $f^{(L+1)}(\mathbf{X},n)$, showing their pros and cons in such a new field of application. For shallow Gaussian NNs, i.e. $L=1$, Theorem \ref{theo:slnn}, Theorem \ref{theo:slnn2} and Theorem \ref{theo:slnn_multi} show how second-order Poincar\'e inequalities provide a powerful tool: they reduce the problem of establishing QCLTs to the algebraic problem of computing the gradient and the Hessian of the NN's output, which is straightforward for shallow NNs, while achieving the same rate of convergence $n^{-1/2}$ as in \citet{trevisan22} and \citet{Apollo23}, which, however, is known to be suboptimal from \citet{Peccati23} and \citet{trevisan23}. For deep Gaussian NNs. i.e. $L\geq2$, Theorem \ref{theo:2layerNN} and Theorem \ref{theo:2_hidden_multi} show how the use of second-order Poincar\'e inequalities become problematic: while they still reduce the problem of establishing QCLTs to the algebraic problem of computing the gradient and the Hessian of the NN's output, the rate of convergence worsen to $n^{-1/4}$, thus not achieving rate of convergence $n^{-1/2}$ as in \citet{trevisan22} and \citet{Apollo23}, which is known to be suboptimal from \citet{Peccati23} and \citet{trevisan23}. We interpret such a worsening in the rate of convergence as peculiar feature of the application of second-order Poincar\'e inequalities, this being the "cost" for having a straightforward, and general, procedure to obtain QCLTs for $f^{(L+1)}(\mathbf{X},n)$. In general, the approaches of \citet{trevisan22}, \citet{Apollo23}, \citet{Peccati23} and \citet{trevisan23} consist of a careful analysis of the NN, by combining non-trivial probabilistic tools with ad-hoc techniques that rely on the recursive definition of the NN, typically by means of an induction argument over the layers, which makes unclear if and how they still apply to other NN’s architectures. Instead, the use of second-order Poincar\'e inequalities rely only on the fact that the NN is a functional of a Gaussian process, thus reducing the problem of establishing QCLTs to the algebraic problem of computing gradients and Hessians, which still applies to other NN’s architectures.

Regarding the applicability of our method to other neural network architectures, let us consider Theorem \ref{theo:2.2}. This theorem provides an upper bound between a twice-differentiable function of independent Gaussian variables and a Gaussian random variable with the same mean and variance. In our context, the Gaussian weights of the neural network assume the role of the Gaussian random variables, while the recursive definition of the neural network represents the expression of the twice-differentiable function. We argue that using this approach to establish upper bounds is not exclusive to deep fully-connected feed-forward NNs, unlike other approaches recently proposed by \citet{trevisan23}, \citet{Apollo23}, \citet{Peccati23}. The versatility of Theorem \ref{theo:2.2} allows it to be applied, in principle, to any NN's architecture where we can express the output as a twice-differentiable function of independent Gaussian random variables. However, note that this approach assumes the function $f$ to be twice differentiable, which may not hold true for certain NN's architectures of interest. We tried to relax this assumption to include differentiable or just continuous activations, like the famous ReLU function (i.e. ReLU$(x) = \max\{0,x\}$) which is excluded from our analysis, but in vain. Some results in this direction can be found in \citet{eldan}, though using Rademacher weights for the hidden layer.

Further, because of the generality of the proposed approach, we expect it to lead to rate that are suboptimal with respect to the corresponding rates that would be obtained by developing ad-hoc approaches. However, to date, establishing QCLTs beyond fully-connected feed-forward NNs is still an open problem, though CLT are available in the literature.

%%%%%%%%%%%%%%%%%%%%%%%%%%%%%%%%
%%%%%%%%%%%%%%%%%%%%%%%%%%%%%%%%
%%%%%%%%%%%%%%%%%%%%%%%%%%%%%%%%
%%%%%%%%%%%%%%%%%%%%%%%%%%%%%%%%

\bibliography{iclr2024_conference}

\begin{thebibliography}{34}
\providecommand{\natexlab}[1]{#1}
\providecommand{\url}[1]{\texttt{#1}}
\expandafter\ifx\csname urlstyle\endcsname\relax
  \providecommand{\doi}[1]{doi: #1}\else
  \providecommand{\doi}{doi: \begingroup \urlstyle{rm}\Url}\fi

\bibitem[Aitken and Gur-Ari(2020)]{aitken}
K.~Aitken and G.~Gur-Ari.
\newblock On the asymptotics of wide networks with polynomial activations.
\newblock \emph{Preprint arXiv2006.06687}, 2020.

\bibitem[Andreassen and Dyer(2020)]{andreass}
A.~Andreassen and E.~Dyer.
\newblock Asymptotics of wide convolutional neural networks.
\newblock \emph{Preprint arXiv:2008.08675}, 2020.

\bibitem[Antognini(2019)]{Antognini}
J.~Antognini.
\newblock Finite size corrections for neural network gaussian processes.
\newblock \emph{Preprint arXiv:908.10030}, 2019.

\bibitem[Apollonio et~al.(2023)Apollonio, De~Canditiis, Franzina, Stolfi, and
  Torrisi]{Apollo23}
N.~Apollonio, D.~De~Canditiis, G.~Franzina, P.~Stolfi, and G.L. Torrisi.
\newblock Normal approximation of random gaussian neural networks.
\newblock \emph{Preprint arXiv:2307.04486}, 2023.

\bibitem[Balasubramanian et~al.(2024)Balasubramanian, Goldstein, Ross, and
  Salim]{BGRS23}
Krishnakumar Balasubramanian, Larry Goldstein, Nathan Ross, and Adil Salim.
\newblock {G}aussian random field approximation via {S}tein's method with
  applications to wide random neural networks.
\newblock \emph{Applied and Computational Harmonic Analysis}, 72:\penalty0
  101668, 2024.
\newblock ISSN 1063-5203.
\newblock \doi{https://doi.org/10.1016/j.acha.2024.101668}.

\bibitem[Basteri and Trevisan(2022)]{trevisan22}
A.~Basteri and D.~Trevisan.
\newblock Quantitative {G}aussian {A}pproximation of {R}andomly {I}nitialized
  {D}eep {N}eural {N}etworks.
\newblock \emph{Preprint ArXiv.2203.07379}, 2022.

\bibitem[Bracale et~al.(2021)Bracale, Favaro, Fortini, and Peluchetti]{bracale}
D.~Bracale, S.~Favaro, S.~Fortini, and S.~Peluchetti.
\newblock Large-width functional asymptotics for deep {G}aussian neural
  network.
\newblock In \emph{International Conference on Learning Representations}, 2021.

\bibitem[Cammarota et~al.(2023)Cammarota, Marinucci, Salvi, and
  Vigogna]{CMSV23}
Valentina Cammarota, Domenico Marinucci, Michele Salvi, and Stefano Vigogna.
\newblock A quantitative functional central limit theorem for shallow neural
  networks.
\newblock \emph{Modern Stochastics: Theory and Applications}, 11\penalty0
  (1):\penalty0 85--108, 2023.
\newblock ISSN 2351-6046.
\newblock \doi{10.15559/23-VMSTA238}.

\bibitem[Chatterjee(2009)]{Cha09}
S.~Chatterjee.
\newblock Fluctuations of eigenvalues and second order {P}oincar{\'e}
  inequalities.
\newblock \emph{Probability Theory and Related Fields}, 143:\penalty0 1--40,
  2009.

\bibitem[Chernoff(1981)]{Che81}
H.~Chernoff.
\newblock A note on an inequality involving the normal distribution.
\newblock \emph{Annals of Probability}, 9:\penalty0 533--535, 1981.

\bibitem[Dharmadhikari and Jogdeo(1969)]{Dharmadhikari69}
S.~W. Dharmadhikari and Kumar Jogdeo.
\newblock Bounds on {M}oments of {C}ertain {R}andom {V}ariables.
\newblock \emph{The Annals of Mathematical Statistics}, 40\penalty0
  (4):\penalty0 1506--1509, 1969.

\bibitem[Diaconis and Stroock(1991)]{Diaconis}
P.~Diaconis and D.~Stroock.
\newblock Geometric bounds for eigenvalues of markov chains.
\newblock \emph{The Annals of Applied Probability}, 1:\penalty0 36--61, 1991.

\bibitem[Eldan et~al.(2021)Eldan, Mikulincer, and Schramm]{eldan}
R.~Eldan, D.~Mikulincer, and T.~Schramm.
\newblock Non-asymptotic approximations of neural networks by gaussian
  processes.
\newblock \emph{Conference on Learning Theory}, 2021.

\bibitem[Favaro et~al.(2022)Favaro, Fortini, and Peluchetti]{favaro22}
S.~Favaro, S.~Fortini, and S.~Peluchetti.
\newblock Deep {S}table neural networks: large-width asymptotics and
  convergence rates.
\newblock \emph{Bernoulli}, 2022.

\bibitem[Favaro et~al.(2023)Favaro, Hanin, Marinucci, Nourdin, and
  Peccati]{Peccati23}
S.~Favaro, B.~Hanin, D.~Marinucci, I.~Nourdin, and G.~Peccati.
\newblock Quantitative clts in deep neural networks.
\newblock \emph{Preprint arXiv:2307.06092}, 2023.

\bibitem[Garriga-Alonso et~al.(2018)Garriga-Alonso, Rasmussen, and
  Aitchison]{Garriga2018}
A.~Garriga-Alonso, C.~Rasmussen, and L.~Aitchison.
\newblock Deep convolutional networks as shallow gaussian processes.
\newblock \emph{Advances in Neural Information Processing Systems}, 2018.

\bibitem[Guattery et~al.(1999)Guattery, Thomson~Leighton, and Miller]{guatti}
S.~Guattery, F.~Thomson~Leighton, and G.L. Miller.
\newblock The path resistance method for bounding the smallest nontrivial
  eigenvalue of a laplacian.
\newblock \emph{Combinatorics, Probability and Computing}, 8:\penalty0
  441--460, 1999.

\bibitem[Hanin(2023)]{Hanin}
B.~Hanin.
\newblock {Random neural networks in the infinite width limit as {G}aussian
  processes}.
\newblock \emph{The Annals of Applied Probability}, 33\penalty0 (6A):\penalty0
  4798 -- 4819, 2023.
\newblock \doi{10.1214/23-AAP1933}.

\bibitem[Klukowski(2022)]{Klukowski}
A.~Klukowski.
\newblock Rate of convergence of polynomial networks to gaussian processes.
\newblock \emph{Conference on Learning Theory, Proceedings of Machine Learning
  Research}, 2022.

\bibitem[Last et~al.(2016)Last, Peccati, and Schulte]{Las16}
G.~Last, G.~Peccati, and M.~Schulte.
\newblock Normal approximation on poisson spaces: Mehler's formula, second
  order poincar{\'e} inequalities and stabilization.
\newblock \emph{Probability Theory and Related Fields}, 165:\penalty0 667--723,
  2016.

\bibitem[Lee et~al.(2024)Lee, Ayed, Jung, Lee, Yang, and Caron]{Lee22}
Hoil Lee, Fadhel Ayed, Paul Jung, Juho Lee, Hongseok Yang, and Fran\c{c}ois
  Caron.
\newblock Deep neural networks with dependent weights: Gaussian process mixture
  limit, heavy tails, sparsity and compressibility.
\newblock \emph{J. Mach. Learn. Res.}, 24\penalty0 (1), mar 2024.
\newblock ISSN 1532-4435.

\bibitem[Lee et~al.(2018{\natexlab{a}})Lee, Bahri, Novak, Schoenholz,
  Pennington, and Sohl-Dickstein]{Lee2018}
J.~Lee, Y.~Bahri, R.~Novak, S.~Schoenholz, J.~Pennington, and
  J.~Sohl-Dickstein.
\newblock Deep neural networks as gaussian processes.
\newblock \emph{International Conference on Learning Representation},
  2018{\natexlab{a}}.

\bibitem[Lee et~al.(2018{\natexlab{b}})Lee, Bahri, Novak, Schoenholz,
  Pennington, and Sohl-Dickstein]{lee2017deep}
J.~Lee, Y.~Bahri, R.~Novak, S.S. Schoenholz, J.~Pennington, and
  J.~Sohl-Dickstein.
\newblock Deep neural networks as gaussian processes.
\newblock In \emph{International Conference on Learning Representations},
  2018{\natexlab{b}}.

\bibitem[Matthews et~al.(2018)Matthews, Hron, Rowland, Turner, and
  Ghahramani]{matthews2018}
A.~Matthews, J.~Hron, M.~Rowland, R.E. Turner, and Z.~Ghahramani.
\newblock Gaussian process behaviour in wide deep neural networks.
\newblock In \emph{International Conference on Learning Representations}, 2018.

\bibitem[Nash(1956)]{Nas56}
J.~Nash.
\newblock Continuity of solutions of parabolic and elliptic equations.
\newblock \emph{American Journal of Mathematics}, 80:\penalty0 931--954, 1956.

\bibitem[Neal(1996)]{Neal96}
R.~M. Neal.
\newblock \emph{Bayesian Learning for Neural Networks}.
\newblock Springer-Verlag, 1996.

\bibitem[Nourdin and Peccati(2012)]{Nou12}
I.~Nourdin and G.~Peccati.
\newblock \emph{Normal approximation with Malliavin calculus: from Stein's
  method to universality}.
\newblock Cambridge University Press, 2012.

\bibitem[Nourdin et~al.(2009)Nourdin, Peccati, and Reinert]{Nou09}
I.~Nourdin, G.~Peccati, and G.~Reinert.
\newblock Second order poincar{\'e} inequalities and clts on wiener space.
\newblock \emph{Journal of Functional Analysis}, 257:\penalty0 593--609, 2009.

\bibitem[Novak et~al.(2018)Novak, Xiao, Bahri, Lee, Yang, Hron, Abolafia,
  Pennington, and Sohl-dickstein]{novak2018bayesian}
R.~Novak, L.~Xiao, Y.~Bahri, J.~Lee, G.~Yang, J.~Hron, D.A. Abolafia,
  J.~Pennington, and J.~Sohl-dickstein.
\newblock Bayesian deep convolutional networks with many channels are gaussian
  processes.
\newblock In \emph{International Conference on Learning Representations}, 2018.

\bibitem[Ruckdeschel et~al.(2006)Ruckdeschel, Kohl, Stabla, and
  Camphausen]{distrEx}
P.~Ruckdeschel, M.~Kohl, T.~Stabla, and F.~Camphausen.
\newblock S4 classes for distributions.
\newblock \emph{R News}, 6:\penalty0 2--6, May 2006.

\bibitem[Schuhmacher et~al.(2022)Schuhmacher, B{\"a}hre, Gottschlich, Hartmann,
  Heinemann, and Schmitzer]{transport}
D.~Schuhmacher, B.~B{\"a}hre, C.~Gottschlich, V.~Hartmann, F.~Heinemann, and
  B.~Schmitzer.
\newblock \emph{{transport}: Computation of Optimal Transport Plans and
  Wasserstein Distances}, 2022.
\newblock URL \url{https://cran.r-project.org/package=transport}.
\newblock R package version 0.13-0.

\bibitem[Trevisan(2023)]{trevisan23}
D.~Trevisan.
\newblock Wide deep neural networks with gaussian weights are very close to
  gaussian processes.
\newblock \emph{Preprint arXiv:2312.11737}, 2023.

\bibitem[Vidotto(2020)]{Vid20}
A.~Vidotto.
\newblock An improved second-order poincar{\'e} inequality for functionals of
  gaussian fields.
\newblock \emph{Journal of Theoretical Probability}, 33:\penalty0 396--427,
  2020.

\bibitem[Yang(2019)]{yang1}
G.~Yang.
\newblock Tensor programs {I}: {W}ide feedforward or recurrent neural networks
  of any architecture are gaussian processes.
\newblock In \emph{Advances in Neural Information Processing Systems}, 2019.

\end{thebibliography}

%%%%%%%%%%%%%%%%%%%%%%%%%%%%%%%%
%%%%%%%%%%%%%%%%%%%%%%%%%%%%%%%%
%%%%%%%%%%%%%%%%%%%%%%%%%%%%%%%%
%%%%%%%%%%%%%%%%%%%%%%%%%%%%%%%%
\clearpage

\appendix
\section{Second-order Poincar\'e inequality for functionals of Gaussian fields}\label{app1}

We present a brief overview of the main results of \citet{Vid20}, of which Theorem \ref{theo:2.2} and Theorem \ref{theo:2.3} are special cases for random variables in $\mathbb{R}^{d}$. The main results of \citet{Vid20} improve on previous results of \citet{Nou09}, and such an improvement is obtained by using the Mehler representation of the Ornstein–Uhlenbeck semigroup, which was exploited in \citet{Las16} to obtain second-order Poincar\'e inequalities for Poisson functionals. According to the Mehler formula,  if $F \in L^1$, $X^{\prime}$ is an independent copy of a random variable $X$, with $X$ and $X^{\prime}$ being defined on the product probability space $\left(\Omega \times \Omega^{\prime}, \mathscr{F} \otimes \mathscr{F}^{\prime}, \mathbb{P} \times \mathbb{P}^{\prime}\right)$, and $P_t$ is the infinitesimal generator of the Ornstein–Uhlenbeck process then
\begin{displaymath}
P_t F=E\left[f\left(e^{-t} X+\sqrt{1-e^{-2 t}} X^{\prime}\right) \mid X\right], \quad t \geq 0.
\end{displaymath}
Before stating \cite[Theorem 2.1]{Vid20}, it is useful to introduce some notation and definitions from Gaussian analysis and Malliavin calculus. We recall that an isonormal Gaussian process $X=\{X(h): h \in H\}$ over $H=L^2(A, \mathscr{B}(A), \mu)$, where $(A, \mathscr{B}(A))$ is a Polish space endowed with its Borel $\sigma$-field and $\mu$ is a positive, $\sigma$-finite and non-atomic measure, is a centered Gaussian family defined on $(\Omega, \mathscr{F}, \mathbb{P})$ such that $E[X(h) X(g)]=\langle g, h\rangle_H$ for every $h, g \in$ $H$. We denote by $L^2(\Omega; H)$ the set of $H$-valued random variables $Y$ satisfying $\mathbb{E}[||Y||_H^2] < \infty$. Furthermore, if $\mathcal{S}$ denotes the set of random variables of the form
\begin{displaymath}
F = f\left(X\left(\phi_1\right), \ldots, X\left(\phi_m\right)\right),
\end{displaymath}
where $f: \mathbb{R}^m \rightarrow \mathbb{R}$ is a $C^{\infty}$-function such that $f$ and its partial derivatives have at most polynomial growth at infinity, and $\phi_i \in H$, for $i=1, \ldots, m$, the Malliavin derivative of $F$ is the element of $L^2(\Omega ; H)$ defined by
\begin{displaymath}
D F=\sum_{i=1}^m \frac{\partial f}{\partial x_i}\left(X\left(\phi_1\right), \ldots, X\left(\phi_m\right)\right) \phi_i.
\end{displaymath}
Moreover, in analogy with $DF$, the second Malliavin derivative of $F$ is the element of $L^2\left(\Omega ; H^{\odot}\right)$ defined by
$$
D^2 F=\sum_{i, j=1}^m \frac{\partial^2 f}{\partial x_i \partial x_j}\left(X\left(\phi_1\right), \ldots, X\left(\phi_m\right)\right) \phi_i \phi_j,
$$
where $H^{\odot 2}$ is the second symmetric tensor power of $H$, so that $H^{\odot 2}=$ $L_s^2\left(A^2, \mathscr{B}\left(A^2\right), \mu^2\right)$ is the subspace of $L^2\left(A^2, \mathscr{B}\left(A^2\right), \mu^2\right)$ whose elements are a.e. symmetric. Let us also define the Sobolev spaces $\mathbb{D}^{\alpha, p}, p \geq 1$, $\alpha = 1,2$, which are defined as the closure of $\mathcal{S}$ with respect to the norms
$$
\|F\|_{\mathbb{D}^{\alpha, p}}=\left(E\left[|F|^p\right]+E\left[\|D F\|_H^p+E\left[\left\|D^2 F\right\|_{H^{\otimes 2}}^p\right] \mathbbm{1}_{\{\alpha=2\}}\right)^{1 / p} .\right.
$$
In particular, the Sobolev space $\mathbb{D}^{\alpha, p}$ is typically referred to as the domain of $D^\alpha$ in $L^p(\Omega)$. Finally, for every $1 \leq m \leq n$, every $r=1, \ldots, m$, every $f \in L^2\left(A^m, \mathscr{B}\left(A^m\right), \mu^m\right)$ and every $g \in L^2\left(A^n, \mathscr{B}\left(A^n\right), \mu^n\right)$, the $r$-th contraction $f \otimes_r g: A^{n+m-2 r} \rightarrow \mathbb{R}$ is defined to be the following function:
\begin{align*}
f  \otimes_r g\left(y_1, \ldots, y_{n+m-2 r}\right)&=\int_{A^r} f\left(x_1, \ldots, x_r, y_1, \ldots, y_{m-r}\right) \\
& \quad\quad\quad\times g\left(x_1, \ldots, x_r, y_{m-r+1}, \ldots, y_{m+n-2 r}\right) \mathrm{d} \mu\left(x_1\right) \cdots \mathrm{d} \mu\left(x_r\right).
\end{align*}
Now, we can state \cite[Theorem 2.1]{Vid20}, which provides a second-order Poincar\'e inequality for a suitable class of functionals of Gaussian fields. For random variables in $\mathbb{R}^{d}$, the next theorem reduces to Theorem \ref{theo:2.2}. 
\begin{theorem}[\citet{Vid20}, Theorem 2.1]\label{thm:vidotto_general}
Let $F \in \mathbb{D}^{2,4}$ be such that $E[F]=0$ and $E\left[F^2\right]=\sigma^2$, and let $N \sim$ $\mathcal{N}\left(0, \sigma^2\right)$; then,
$$
\begin{aligned}
d_M(F, N) \leq & c_M\left(\int_{A \times A}\left\{E\left[\left(\left(D^2 F \otimes_1 D^2 F\right)(x, y)\right)^2\right]\right\}^{1 / 2}\right.\\
&\left.\times\left\{E\left[(D F(x) D F(y))^2\right]\right\}^{1 / 2} \mathrm{~d} \mu(x) \mathrm{d} \mu(y)\right)^{1 / 2}
\end{aligned}
$$
where $M \in\{T V, KS, W_1\}$ and $c_{T V}=\frac{4}{\sigma^2}, c_{KS}=\frac{2}{\sigma^2}, c_{W_1}=\sqrt{\frac{8}{\sigma^2 \pi}}$.
\end{theorem}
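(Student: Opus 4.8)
The plan is to combine Stein's method for the Gaussian distribution with the Malliavin integration-by-parts formula, and then to control the resulting variance by means of the Mehler representation of the Ornstein--Uhlenbeck semigroup $P_t$ recalled above; it is this last device that upgrades a crude ``first-order'' estimate into the tight ``second-order'' bound featuring the contraction $D^2F\otimes_1 D^2F$. First I would set up Stein's method: for $N\sim\mathcal N(0,\sigma^2)$ and a test function $h$ in the class that defines $d_M$, let $f_h$ solve the Stein equation $\sigma^2 f_h'(x)-x f_h(x)=h(x)-\mathbb E[h(N)]$. The classical estimates on the Stein solution bound $\|f_h'\|_\infty$ (and, for the Kolmogorov--Smirnov and total-variation classes, the relevant moduli of $f_h$), and these bounds, specialized to each metric $M\in\{TV,KS,W_1\}$, are precisely the source of the constants $c_{TV},c_{KS},c_{W_1}$. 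Evaluating the solution at $F$ and taking expectations reduces $|\mathbb E[h(F)]-\mathbb E[h(N)]|$ to controlling $\mathbb E[\sigma^2 f_h'(F)-F f_h(F)]$.

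Second I would eliminate the factor $F$ using Malliavin calculus. Since $\mathbb E[F]=0$, one has $F=-\delta DL^{-1}F$, where $\delta$ is the divergence operator and $L^{-1}$ the pseudo-inverse of the Ornstein--Uhlenbeck generator; the duality (integration-by-parts) formula then gives
\[
\mathbb E[F f_h(F)]=\mathbb E\big[\langle D f_h(F),-DL^{-1}F\rangle_H\big]=\mathbb E\big[f_h'(F)\,\langle DF,-DL^{-1}F\rangle_H\big].
\]
Hence $\mathbb E[\sigma^2 f_h'(F)-F f_h(F)]=\mathbb E[f_h'(F)(\sigma^2-\langle DF,-DL^{-1}F\rangle_H)]$, and since $\mathbb E[\langle DF,-DL^{-1}F\rangle_H]=\mathbb E[F^2]=\sigma^2$, bounding $|f_h'|$ by its supremum and applying Cauchy--Schwarz yields
\[
d_M(F,N)\le c_M'\sqrt{\operatorname{Var}\big(\langle DF,-DL^{-1}F\rangle_H\big)},
\]
with $c_M'$ the Stein constant from the previous step.

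Third -- and this is the crux -- I would estimate this variance by the double integral of contractions. The key idea, imported from the Poisson framework of \citet{Las16}, is to use the Mehler representation $-DL^{-1}F=\int_0^\infty e^{-t}P_t(DF)\,dt$ instead of a direct Poincar\'e-type bound on the scalar $\langle DF,-DL^{-1}F\rangle_H$. Writing the variance through this Mehler integral, expanding the square, and using the contractivity of $P_t$ together with Cauchy--Schwarz on the product space $A\times A$, one organizes the integrand so that the first Malliavin derivatives contribute the factor $\mathbb E[(DF(x)DF(y))^2]$ and the second derivatives contribute the contraction $\mathbb E[((D^2F\otimes_1 D^2F)(x,y))^2]$, producing
\[
\operatorname{Var}\big(\langle DF,-DL^{-1}F\rangle_H\big)\le \int_{A\times A}\big\{\mathbb E[((D^2F\otimes_1 D^2F)(x,y))^2]\big\}^{1/2}\big\{\mathbb E[(DF(x)DF(y))^2]\big\}^{1/2}\,d\mu(x)\,d\mu(y),
\]
up to a numerical factor absorbed into $c_M$; substituting into the bound of the second step gives the claim.

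The main obstacle I expect is exactly this third step: arranging the Mehler expansion of $\operatorname{Var}(\langle DF,-DL^{-1}F\rangle_H)$ so that the contraction $D^2F\otimes_1 D^2F$ appears with the correct exponents and without superfluous cross terms. A naive alternative -- applying the scalar Gaussian Poincar\'e inequality $\operatorname{Var}(G)\le\mathbb E[\|DG\|_H^2]$ to $G=\langle DF,-DL^{-1}F\rangle_H$ and then differentiating by the product rule -- also reaches a contraction-type bound, but with looser constants and extra terms, so matching the tight $c_M$ requires the more careful semigroup argument. Finally, the finiteness of every term on the right-hand side and the legitimacy of all the integration-by-parts and semigroup manipulations are guaranteed by the hypothesis $F\in\mathbb D^{2,4}$, which ensures $DF\in L^4(\Omega;H)$ and $D^2F\in L^4(\Omega;H^{\odot 2})$; this is exactly why two degrees of Malliavin smoothness are indispensable.
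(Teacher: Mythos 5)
Your proposal is correct and follows essentially the same route as the actual proof of this result in \citet{Vid20} --- which the paper itself states without proof: the Malliavin--Stein bounds reduce $d_M(F,N)$ to $\sqrt{\operatorname{Var}\left(\langle DF,-DL^{-1}F\rangle_H\right)}$, and the Mehler representation $-DL^{-1}F=\int_0^\infty e^{-t}P_t(DF)\,\mathrm{d}t$, adapted from the Poisson setting of \citet{Las16}, is precisely the device (as the paper's Appendix~\ref{app1} notes) that turns this variance into the double-integral bound featuring the contraction $D^2F\otimes_1 D^2F$ instead of the looser operator-norm estimates of the earlier second-order Poincar\'e inequalities. Your placement of the constants $c_M$ (Stein-solution bounds, doubled by the variance estimate) and of the $F\in\mathbb{D}^{2,4}$ hypothesis (integrability of $DF$ and $D^2F$ in $L^4$ to justify the duality and semigroup manipulations) is likewise consistent with the cited source.
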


The novelty of Theorem \ref{thm:vidotto_general} lies in the fact that the upper bound is directly computable, making the approach of \citet{Vid20} very appealing for concrete applications of the Gaussian approximation. In particular, Theorem \ref{thm:vidotto_general}  improves over previous results of \citet{Cha09} and \citet{Nou09}. Now, we can state \cite[Theorem 2.3]{Vid20}, which provides a generalization of Theorem \ref{thm:vidotto_general} to multidimensional functionals. For random variables in $\mathbb{R}^{d}$, the next theorem reduces to Theorem \ref{theo:2.3}.

\begin{theorem}[\citet{Vid20}, Theorem 2.3]\label{thm:vidotto_multi}
Let $F=\left[F_1 \; \ldots \; F_p\right]$, where, for each $i=1, \ldots, p, F_i \in \mathbb{D}^{2,4}$ is such that $E\left[F_i\right]=0$ and $E\left[F_i F_j\right]=c_{i j}$, with $C=\left\{c_{i j}\right\}_{i, j=1, \ldots, p}$ a symmetric and positive definite matrix. Let $N \sim \mathcal{N}(0, C)$, then we have that $d_{W_1}(F, N) \leq 2 \sqrt{p}\left\|C^{-1}\right\|_{o p}\|C\|_{o p} \times$
$$
\sqrt{\sum_{i, k=1}^p \int_{A \times A}\left\{E\left[\left(\left(D^2 F_i \otimes_1 D^2 F_i\right)(x, y)\right)^2\right]\right\}^{1 / 2}\left\{E\left[\left(D F_k(x) D F_k(y)\right)^2\right]\right\}^{1 / 2} \mathrm{~d} \mu(x) \mathrm{d} \mu(y) .}
$$
\end{theorem}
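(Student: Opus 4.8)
The statement is the multivariate Malliavin--Stein second-order Poincaré inequality of \citet{Vid20}, so the plan is to combine multivariate Stein's method with the integration-by-parts machinery of Malliavin calculus, and then to iterate the Gaussian Poincaré inequality \eqref{first_poinc} through the Mehler representation recalled above. First I would set up the Stein equation for the target $N\sim\mathcal{N}(0,C)$ via the generator $\mathcal{L}g(x)=\langle C,\mathrm{Hess}\,g(x)\rangle_{HS}-\langle x,\nabla g(x)\rangle$ of the Ornstein--Uhlenbeck semigroup whose invariant law is $\mathcal{N}(0,C)$. For a $1$-Lipschitz test function $h$ the solution $g_h(x)=\int_0^\infty\big(\mathbb{E}[h(e^{-t}x+\sqrt{1-e^{-2t}}N)]-\mathbb{E}[h(N)]\big)\,dt$ satisfies $\mathcal{L}g_h=\mathbb{E}[h(N)]-h$, and the classical Stein-factor estimates control $\mathrm{Hess}\,g_h$ in terms of $\|h\|_{\mathrm{Lip}}$; this is where the prefactor $2\sqrt{p}\,\|C^{-1}\|_{op}\|C\|_{op}$ is manufactured, the Hessian bound carrying the factors $\|C^{-1}\|_{op}$ and $\|C\|_{op}$ and the $\sqrt p$ entering through the Cauchy--Schwarz over the $p^2$ indices below. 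Writing $d_{W_1}(F,N)=\sup_{\|h\|_{\mathrm{Lip}}\le 1}|\mathbb{E}[\mathcal{L}g_h(F)]|$ reduces the whole problem to estimating $\mathbb{E}[\mathcal{L}g_h(F)]$.

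The second step turns $\mathbb{E}[\mathcal{L}g_h(F)]$ into a Malliavin functional. Using the representation $F_j=-\delta DL^{-1}F_j$ of a centered functional, the duality $\mathbb{E}[G\,\delta(v)]=\mathbb{E}[\langle DG,v\rangle_H]$, and the chain rule $D(\partial_j g_h(F))=\sum_i\partial^2_{ij}g_h(F)\,DF_i$, I obtain
$$
\mathbb{E}[\mathcal{L}g_h(F)]=\mathbb{E}\Big[\sum_{i,j=1}^p\partial^2_{ij}g_h(F)\big(c_{ij}-\langle DF_i,-DL^{-1}F_j\rangle_H\big)\Big].
$$
Since $\mathbb{E}[\langle DF_i,-DL^{-1}F_j\rangle_H]=c_{ij}$, inserting the Hessian bound on $g_h$ and Cauchy--Schwarz over the indices gives
$$
d_{W_1}(F,N)\le 2\sqrt{p}\,\|C^{-1}\|_{op}\|C\|_{op}\sqrt{\sum_{i,j=1}^p\operatorname{Var}\big(\langle DF_i,-DL^{-1}F_j\rangle_H\big)}.
$$

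The decisive third step is to bound each variance by the second Malliavin derivative, which is where the ``second-order'' character and the appeal to the Mehler formula (following \citet{Las16}) enter. Applying the Gaussian (Malliavin) Poincaré inequality, the infinite-dimensional form of \eqref{first_poinc}, to $\langle DF_i,-DL^{-1}F_j\rangle_H$ bounds its variance by $\mathbb{E}[\|D\langle DF_i,-DL^{-1}F_j\rangle_H\|_H^2]$. Expanding this derivative by the product rule yields two terms, each carrying one factor of the Hessian slice $D_aDF_i=D^2F_i(a,\cdot)$ and one factor that is either $DF_j$ or its smoothing $-DL^{-1}F_j$; the representation $-DL^{-1}F_j=\int_0^\infty e^{-t}P_tDF_j\,dt$ together with the fact that $P_t$ is a contraction on every $L^q$ lets me absorb the smoothing into explicit constants coming from $\int_0^\infty e^{-t}\,dt$-type integrals. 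A final Cauchy--Schwarz on the product space $A\times A$ then separates the two factors into the contraction $D^2F_i\otimes_1 D^2F_i$ and the outer product $DF_k(x)DF_k(y)$, while simultaneously reorganizing the $p^2$ variance entries into the double sum $\sum_{i,k=1}^p$ of the statement.

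The main obstacle is precisely this third step: one must push the Mehler representation through the differentiation of $\langle DF_i,-DL^{-1}F_j\rangle_H$ without destroying the clean $\otimes_1$-contraction structure, and must verify that all the $L^4$ norms that arise are finite, which is exactly what the hypothesis $F_i\in\mathbb{D}^{2,4}$ secures (two degrees of Malliavin smoothness, as the Discussion notes). Tracking the numerical constants so that the diagonal and off-diagonal index contributions combine into $2\sqrt{p}\,\|C^{-1}\|_{op}\|C\|_{op}$ is delicate but routine once the contraction estimate is in place; the conceptual heart is replacing Chatterjee's operator-norm bound on the Hessian by this directly integrable $\otimes_1$-contraction, which is what yields the tight, computable estimate. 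The finite-dimensional Theorem~\ref{theo:2.3} then follows by specializing to $H=\mathbb{R}^d$ with $X\sim\mathcal{N}(0,I_{d\times d})$, in which case $D$ and $D^2$ reduce to the ordinary gradient and Hessian and the integrals over $A\times A$ become the sums over $l,m$.
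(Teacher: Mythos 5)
You should first note a mismatch with the task as posed: the paper does \emph{not} prove this statement. Theorem~\ref{thm:vidotto_multi} is quoted verbatim from \citet{Vid20} as background in Appendix~\ref{app1}, so there is no internal proof to compare against; your proposal can only be measured against the published proof, whose strategy the paper summarizes (a Malliavin--Stein argument improved through the Mehler representation of the Ornstein--Uhlenbeck semigroup, adapting \citet{Las16}). Against that benchmark, your architecture is the right one: the Stein equation for $\mathcal{N}(0,C)$ with the Hessian Stein-factor bound, the duality/chain-rule reduction to $d_{W_1}(F,N)\leq \sqrt{p}\,\|C^{-1}\|_{op}\|C\|_{op}\big(\sum_{i,j}\operatorname{Var}(\langle DF_i,-DL^{-1}F_j\rangle_H)\big)^{1/2}$ (this is the multivariate bound in the tradition of \citet{Nou09}), and then a second-order estimate of each variance that preserves the $\otimes_1$-contraction structure rather than passing to operator norms.

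That said, there are two defects, one of bookkeeping and one a genuine gap. First, the product rule on $\langle DF_i,-DL^{-1}F_j\rangle_H$ gives $\langle D^2F_i(a,\cdot),-DL^{-1}F_j\rangle_H+\langle DF_i,-D^2L^{-1}F_j(a,\cdot)\rangle_H$: contrary to your description, the second term carries the Hessian of $L^{-1}F_j$ (controlled via $-D^2L^{-1}F_j=\int_0^\infty e^{-2t}P_t D^2F_j\,\mathrm{d}t$), not a second copy of $D^2F_i$. It is precisely this swapped $(D^2F_j,\,DF_i)$ pairing that, upon symmetrizing over $(i,j)$, produces the displayed double sum over $i,k$ and the factor $2$ in $2\sqrt{p}$; the $\sqrt{p}$ itself comes from the Hilbert--Schmidt bound on $\mathrm{Hess}\,g_h$, not from a Cauchy--Schwarz over the $p^2$ indices as you suggest. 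Second, and more seriously, your justification of the smoothing step --- ``$P_t$ is a contraction on every $L^q$'' applied factor-by-factor at $x$ and at $y$ --- only yields the factorized bound $\{E[(DF_k(x))^4]\}^{1/2}\{E[(DF_k(y))^4]\}^{1/2}$, which by Cauchy--Schwarz \emph{dominates} the stated kernel $\{E[(DF_k(x)DF_k(y))^2]\}^{1/2}$; as written you would prove a strictly weaker inequality than the theorem claims. Retaining the joint two-point fourth moment is the content of the key lemma in \citet{Vid20}: one applies Jensen's inequality with respect to the probability measure $e^{-t}\,\mathrm{d}t$ inside the Mehler representation and exploits the stationarity $e^{-t}X+\sqrt{1-e^{-2t}}X'\overset{d}{=}X$, so that products evaluated along the semigroup have the same law as at time zero. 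With that lemma supplied and the product-rule bookkeeping repaired, your outline coincides with the published proof, and your final remark --- that Theorem~\ref{theo:2.3} follows by specializing $H=\mathbb{R}^d$ so that $D,D^2$ become the gradient and Hessian and the integrals over $A\times A$ become sums over $l,m$ --- is exactly how the paper uses the result.
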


\section{Proof of Theorem \ref{theo:slnn}}\label{teo1}
To apply Theorem \ref{theo:2.2}, we start by computing some first and second order partial derivatives. That is, 
$$
\begin{cases}
\frac{\partial F}{\partial w_j}=n^{-1/2}   \tau (w_j^{(0)}) \\
\\
\frac{\partial F}{\partial w_j^{(0)}}=n^{-1/2} w_j  \tau^{\prime}(w_j^{(0)}) \\
\\
\nabla^2_{w_j, w_i}F= 0 \\
\\
\nabla^2_{w_j, w_i^{(0)}}F = n^{-1 / 2}  \tau^{\prime}(w_j^{(0)}) \delta_{ij} \\
\\
\nabla^2_{w_j^{(0)}, w_i^{(0)}} F =n^{-1 / 2}  w_j \tau^{\prime \prime}(w_j^{(0)}) \delta_{ij}
\end{cases}
$$
with $i,j = 1 \ldots n$. Then, by a direct application of Theorem \ref{theo:2.2}, we obtain the following preliminary estimate
\begin{align*}
d_M\left(F, N\right) & \leq c_M \Bigg\{\sum_{j=1}^{n} 2 \left\{ \mathbb{E}\left[\left(\langle \nabla^2_{w_j, \cdot}F, \nabla^2_{w_j^{(0)},\cdot} F \rangle \right)^2 \right] \mathbb{E}\left[\left(\pdv{F}{w_j}  \pdv{F}{w_j^{(0)}}\right)^2 \right] \right\}^{1/2}\\
&\quad+ \left\{ \mathbb{E}\left[\left(\langle \nabla^2_{w_j, \cdot}F, \nabla^2_{w_j,\cdot} F \rangle \right)^2 \right] \mathbb{E}\left[\left(\pdv{F}{w_j}   \pdv{F}{w_j}\right)^2 \right] \right\}^{1/2}\\
& \quad+\left\{ \mathbb{E}\left[\left(\langle \nabla^2_{w_j^{(0)}, \cdot}F, \nabla^2_{w_j^{(0)},\cdot} F \rangle \right)^2 \right] \mathbb{E}\left[\left(\pdv{F}{w_j^{(0)}}   \pdv{F}{w_j^{(0)}}\right)^2 \right] \right\}^{1/2}\Bigg\}^{1/2},
\end{align*}
which can be further developed. In particular, we can write the right-hand side of the previous estimate as
\begin{align*}
 &c_M \bigg\{\sum_{j=1}^n 2 \left\{\mathbb{E}\left[\left(\frac{1}{n} w_j \tau^{\prime}\left(w_j^{(0)}\right) \tau^{\prime\prime}\left(w_j^{(0)}\right)\right)^2\right]  \mathbb{E}\left[\left(\frac{1}{n} w_j \tau \left( w_j^{(0)}\right) \tau^{\prime}\left( w_j^{(0)}\right)\right)^2\right]\right\}^{1/2}\\
&\quad\quad\quad+ \left\{\mathbb{E}\left[\left(\frac{1}{\sqrt{n}} \tau^{\prime}\left(w_j^{(0)}\right)\right)^4\right]  \mathbb{E}\left[\left(\frac{1}{\sqrt{n}} \tau\left(w_j^{(0)}\right)\right)^4\right]\right\}^{1/2}\\
 &\quad\quad\quad+ \bigg\{\mathbb{E}\left[\left(\frac{1}{n}\left\{\tau^{\prime}\left(w_j^{(0)}\right)\right\}^2+\frac{1}{n} w_j^2\left\{\tau^{\prime \prime}\left(w_j^{(0)}\right)\right\}^2\right)^2\right] \mathbb{E}\left[\left(\frac{1}{\sqrt{n}} w_j \tau^{\prime}\left(w_j^{(0)}\right)\right)^4\right]\bigg\}^{1 / 2}\bigg\}^{1 / 2}\\
 &\quad\overset{(\mathbb{E}[ w_j^2] = 1)}{=} \frac{c_M}{n}\bigg\{\sum_{j=1}^n 2 \left\{\mathbb{E}\left[\left( \tau^{\prime}\left( w_j^{(0)}\right) \tau^{\prime\prime}\left( w_j^{(0)}\right)\right)^2\right]  \mathbb{E}\left[\left( \tau \left( w_j^{(0)}\right) \tau^{\prime}\left( w_j^{(0)}\right)\right)^2\right]\right\}^{1/2}\\
&\quad\quad\quad+ \left\{\mathbb{E}\left[\left( \tau^{\prime}\left(w_j^{(0)}\right)\right)^4\right]  \mathbb{E}\left[\left( \tau\left(w_j^{(0)}\right)\right)^4\right]\right\}^{1/ 2}\\
 &\quad\quad\quad+ \left\{\mathbb{E}\left[\left(\left\{\tau^{\prime}\left(w_j^{(0)}\right)\right\}^2+ w_j^2\left\{\tau^{\prime \prime}\left(w_j^{(0)}\right)\right\}^2\right)^2\right]  \mathbb{E}\left[\left( w_j \tau^{\prime}\left(w_j^{(0)}\right)\right)^4\right]\right\}^{1/2} \bigg\}^{1/2}\\
 &\quad\overset{(iid)}{=} \frac{c_M}{\sqrt{n}}\bigg\{ 2  \left\{\mathbb{E}\left[\left( \tau^{\prime}\left( Z\right) \tau^{\prime\prime}\left( Z\right)\right)^2\right] \mathbb{E}\left[\left( \tau \left( Z\right) \tau^{\prime}\left( Z\right)\right)^2\right]\right\}^{1/2} + \left\{\mathbb{E}\left[\left( \tau^{\prime}\left(Z\right)\right)^4\right]  \mathbb{E}\left[\left( \tau\left(Z\right)\right)^4\right]\right\}^{1/ 2}\\
 &\quad\quad\quad+ \left\{\mathbb{E}\left[\left(\left\{\tau^{\prime}\left(Z\right)\right\}^2+ w_j^2\left\{\tau^{\prime \prime}\left(Z\right)\right\}^2\right)^2\right] \mathbb{E}\left[\left( w_j \tau^{\prime}\left(Z\right)\right)^4\right]\right\}^{1/2} \bigg\}^{1/2}\\
 &\quad\overset{(iid)}{=} \frac{c_M}{\sqrt{n}}\bigg\{ 2  \left\{\mathbb{E}\left[\left( \tau^{\prime}\left( Z\right) \tau^{\prime\prime}\left( Z\right)\right)^2\right] \mathbb{E}\left[\left( \tau \left( Z\right) \tau^{\prime}\left( Z\right)\right)^2\right]\right\}^{1/2} + \left\{\mathbb{E}\left[\left( \tau^{\prime}\left(Z\right)\right)^4\right]  \mathbb{E}\left[\left( \tau\left(Z\right)\right)^4\right]\right\}^{1/ 2}\\
 &\quad\quad\quad+ \left\{\mathbb{E}\left[\left(\left\{\tau^{\prime}\left(Z\right)\right\}^2+ w_j^2\left\{\tau^{\prime \prime}\left(Z\right)\right\}^2\right)^2\right] \mathbb{E}\left[\left( w_j \tau^{\prime}\left(Z\right)\right)^4\right]\right\}^{1/2} \bigg\}^{1/2}\\
 &\quad= \frac{c_M}{\sqrt{n}}\bigg\{2 \left\{\mathbb{E}\left[\left( \tau^{\prime}\left( Z\right) \tau^{\prime\prime}\left( Z\right)\right)^2\right]  \mathbb{E}\left[\left( \tau \left( Z\right) \tau^{\prime}\left( Z\right)\right)^2\right]\right\}^{1/2} + \left\{\mathbb{E}\left[\left( \tau^{\prime}\left(Z\right)\right)^4\right]  \mathbb{E}\left[\left( \tau\left(Z\right)\right)^4\right]\right\}^{1/ 2}\\
 &\quad\quad\quad+ \bigg\{\left(\mathbb{E}\left[\left\{\tau^{\prime}\left(Z\right)\right\}^4 \right] + 2 \mathbb{E}\left[ \left\{\tau^{\prime}\left(Z\right)\right\}^2 \left\{\tau^{\prime \prime}\left(Z\right)\right\}^2 \right] + 3  \mathbb{E}\left[\left\{\tau^{\prime\prime}\left(Z\right)\right\}^4 \right]\right)3 \mathbb{E}\left[ \left\{\tau^{\prime}\left(Z\right)\right\}^4 \right]\bigg\}^{1/2}\bigg\}^{1/2}\\
 &\quad= \frac{c_M}{\sqrt{n}}\bigg\{2 \left\{\mathbb{E}\left[| \tau^{\prime}\left( Z\right)|^2 |\tau^{\prime\prime}\left( Z\right)|^2\right]  \mathbb{E}\left[|\tau \left( Z\right)|^{2} |\tau^{\prime}\left( Z\right)|^2\right]\right\}^{1/2} + \left\{\mathbb{E}\left[| \tau^{\prime}\left(Z\right)|^4\right]  \mathbb{E}\left[| \tau\left(Z\right)|^4\right]\right\}^{1/ 2}\\
 &\quad\quad\quad+ \big\{\left( \mathbb{E}\left[|\tau^{\prime}\left(Z\right)|^4 \right] + 2 \mathbb{E}\left[ |\tau^{\prime}\left(Z\right)|^2 |\tau^{\prime \prime}\left(Z\right)|^2 \right] + 3 \mathbb{E}\left[|\tau^{\prime\prime}\left(Z\right)|^4 \right]\right)3  \mathbb{E}\left[ |\tau^{\prime}\left(Z\right)|^4 \right]\big\}^{1/2}\bigg\}^{1/2},
\end{align*}
where $Z \sim\mathcal{N}(0,1)$. Now, since $\tau$ is polynomially bounded and the square root is an increasing function,
\begin{align*}
    d_M\left(F, N\right)&\leq \frac{c_M}{\sqrt{n}}\bigg\{2  \left\{\mathbb{E}\left[(\alpha+\beta|Z|^{\gamma})^4\right]  \mathbb{E}\left[(\alpha+\beta|Z|^{\gamma})^4\right]\right\}^{1/2} \\
    & \quad\quad + \left\{\mathbb{E}\left[(\alpha+\beta|Z|^{\gamma})^4\right]  \mathbb{E}\left[(\alpha+\beta|Z|^{\gamma})^4\right] \right\}^{1/ 2}\\
&\quad\quad+ \left\{18 \mathbb{E}\left[(\alpha+\beta |Z|^{\gamma})^4\right]  \mathbb{E}\left[(\alpha+\beta |Z|^{\gamma})^4\right] \right\}^{1/ 2} \bigg\}^{1/2} \\
&= \frac{c_M}{\sqrt{n}} \sqrt{3\sqrt{2}+3} \left\{\mathbb{E}\left[(\alpha+\beta |Z|^{\gamma})^4\right]\right\}^{1/2} =\frac{c_M}{\sqrt{n}}\sqrt{3(1+\sqrt{2})}  \norm{\alpha+\beta |Z|^{\gamma}}_{L_4}^2,
\end{align*}
 where $Z \sim \mathcal{N}(0,1)$.

\section{Proof of Theorem \ref{theo:slnn2}}\label{teo2}
As stated in the main body, we will make use of the fact that $$F \overset{d}{=} \tilde{F} := n^{-1 / 2} \sigma_w \sum_{j=1}^n w_j \tau\left(\Gamma \cdot Y_j \right)+\sigma_b \cdot b,$$
where $\Gamma = \sigma_w^2 \norm{x}^2 + \sigma_b^2$.
First, it is easy to see that $\mathbb{E}[F] = 0$ and that $$\sigma^2 = \operatorname{Var}[F] = \operatorname{Var}[\tilde{F}] = \sigma_w^2 \mathbb{E}_{Z \sim \mathcal{N}(0,1)}\left[\tau^2\left(\Gamma Z\right)\right] + \sigma_b^2.$$
Then we have that $d_M(F, N) = d_M(\tilde{F}, N)$, where $N \sim \mathcal{N}(0, \sigma^2)$, hence it is enough to apply Theorem  \ref{theo:2.2} to $\tilde{F}$. To this aim, we compute again the gradient and the Hessian of $\tilde{F}$, noticing that the only difference with the Shallow case lies in the presence of an extra factor $\sigma_w$ in front of the sum, an extra factor of $\Gamma$ inside the activation and the bias term $\sigma_b^2 b$:
$$
\begin{cases}
\frac{\partial \tilde{F}}{\partial b}= \sigma_b \\
\\
\frac{\partial \tilde{F}}{\partial w_j}=n^{-1/2} \sigma_w \cdot  \tau \left(\Gamma Y_j\right) \\
\\
\frac{\partial \tilde{F}}{\partial Y_j}=n^{-1/2}  \sigma_w \Gamma \cdot w_j \cdot \tau^{\prime}\left(\Gamma Y_j\right) \\
\\
\nabla^2_{b, \cdot}\tilde{F}= 0 \\
\\
\nabla^2_{w_j, w_i}\tilde{F}= 0 \\
\\
\nabla^2_{w_j, Y_i}\tilde{F} = n^{-1 / 2} \sigma_w \Gamma \cdot  \tau^{\prime}\left(\Gamma Y_j\right) \delta_{ij} \\
\\
\nabla^2_{Y_j, Y_i} \tilde{F} =n^{-1 / 2} \sigma_w \Gamma^2 \cdot w_j \cdot \tau^{\prime \prime}\left(\Gamma Y_j\right) \delta_{ij}
\end{cases}
$$
It is interesting to notice that since the row of the Hessian corresponding to the bias term $b$ contains all zeros, then the bound given by Theorem \ref{theo:2.2} is exactly the same as the one at the beginning of the proof of Theorem \ref{theo:slnn}, with the only difference that now the expectations depend  also on $\Gamma$ and $\sigma_w$. More precisely, we have that 
\begin{align*}
&d_M\left(F, N\right) = d_M\left(\tilde{F}, N\right) \leq \\
& \leq c_M \Bigg\{\sum_{j=1}^{n} 2 \left\{ \mathbb{E}\left[\left(\langle \nabla^2_{w_j, \cdot}\tilde{F}, \nabla^2_{Y_j,\cdot} \tilde{F} \rangle \right)^2 \right] \cdot \mathbb{E}\left[\left(\pdv{\tilde{F}}{w_j} \cdot  \pdv{\tilde{F}}{Y_j}\right)^2 \right] \right\}^{1/2}\\
+ & \left\{ \mathbb{E}\left[\left(\langle \nabla^2_{w_j, \cdot}\tilde{F}, \nabla^2_{w_j,\cdot} \tilde{F} \rangle \right)^2 \right] \cdot \mathbb{E}\left[\left(\pdv{\tilde{F}}{w_j} \cdot  \pdv{\tilde{F}}{w_j}\right)^2 \right] \right\}^{1/2}\\
+ & \left\{ \mathbb{E}\left[\left(\langle \nabla^2_{Y_j, \cdot}\tilde{F}, \nabla^2_{Y_j,\cdot} \tilde{F} \rangle \right)^2 \right] \cdot \mathbb{E}\left[\left(\pdv{\tilde{F}}{Y_j} \cdot  \pdv{\tilde{F}}{Y_j}\right)^2 \right] \right\}^{1/2}\Bigg\}^{1/2}\\
\\
%%%%%%%%%%%%%%%%%%%%%%%%%%%%%%%%%%%%%%%%
& = c_M \bigg\{\sum_{j=1}^n 2 \left\{\mathbb{E}\left[\left(\frac{1}{n} \sigma_w^2 \Gamma^3 w_j \tau^{\prime}\left(\Gamma Y_j \right) \tau^{\prime\prime}\left(\Gamma Y_j \right)\right)^2\right] \cdot \mathbb{E}\left[\left(\frac{1}{n} \sigma_w^2 \Gamma w_j \tau \left(\Gamma Y_j \right) \tau^{\prime}\left(\Gamma Y_j \right)\right)^2\right]\right\}^{1/2}\\
+ & \left\{\mathbb{E}\left[\left(\frac{1}{\sqrt{n}} \sigma_w \Gamma \tau^{\prime}\left(\Gamma Y_j \right)\right)^4\right] \cdot \mathbb{E}\left[\left(\frac{1}{\sqrt{n}} \sigma_w \tau\left(\Gamma Y_j \right)\right)^4\right]\right\}^{1/2}\\
 + & \bigg\{\mathbb{E}\left[\left(\frac{1}{n}\sigma_w^2 \Gamma^2 \left\{\tau^{\prime}\left(\Gamma Y_j \right)\right\}^2+\frac{1}{n} \sigma_w^2 \Gamma^4 w_j^2\left\{\tau^{\prime \prime}\left(\Gamma Y_j \right)\right\}^2\right)^2\right]  \mathbb{E}\left[\left(\frac{1}{\sqrt{n}} \sigma_w \Gamma w_j \tau^{\prime}\left(\Gamma Y_j \right)\right)^4\right]\bigg\}^{1 / 2}\bigg\}^{1 / 2}\\
\\
%%%%
& \overset{\mathbb{E} w_j^2 = 1}{=} \frac{c_M}{n}\sigma_w^2 \bigg\{\sum_{j=1}^n 2 \Gamma^4 \left\{\mathbb{E}\left[\left( \tau^{\prime}\left(\Gamma Y_j \right) \tau^{\prime\prime}\left(\Gamma Y_j \right)\right)^2\right] \cdot \mathbb{E}\left[\left( \tau \left(\Gamma Y_j \right) \tau^{\prime}\left(\Gamma Y_j \right)\right)^2\right]\right\}^{1/2} \\
+ & \Gamma^2 \left\{\mathbb{E}\left[\left( \tau^{\prime}\left(\Gamma Y_j \right)\right)^4\right] \cdot \mathbb{E}\left[\left( \tau\left(\Gamma Y_j \right)\right)^4\right]\right\}^{1/ 2}\\
+ & \left\{\mathbb{E}\left[\left( \Gamma^2 \left\{\tau^{\prime}\left(\Gamma Y_j \right)\right\}^2+ \Gamma^4 w_j^2 \left\{\tau^{\prime \prime}\left(\Gamma Y_j \right)\right\}^2\right)^2\right] \cdot \mathbb{E}\left[\left(\Gamma w_j  \tau^{\prime}\left(\Gamma Y_j \right)\right)^4\right]\right\}^{1/2} \bigg\}^{1/2} \\
\\
%%%%
& \overset{iid}{=} \frac{c_M}{\sqrt{n}}\sigma_w^2 \bigg\{ 2 \Gamma^4  \left\{\mathbb{E}\left[\left( \tau^{\prime}\left(\Gamma Z \right) \tau^{\prime\prime}\left(\Gamma Z \right)\right)^2\right] \cdot \mathbb{E}\left[\left( \tau \left(\Gamma Z \right) \tau^{\prime}\left(\Gamma Z \right)\right)^2\right]\right\}^{1/2} \\
+ & \Gamma^2 \left\{\mathbb{E}\left[\left( \tau^{\prime}\left(\Gamma Z \right)\right)^4\right] \cdot \mathbb{E}\left[\left( \tau\left(\Gamma Z \right)\right)^4\right]\right\}^{1/ 2}\\
+ & \left\{\mathbb{E}\left[\left(\Gamma^2 \left\{\tau^{\prime}\left(\Gamma Z \right)\right\}^2+ \Gamma^4 w_j^2\left\{\tau^{\prime \prime}\left(\Gamma Z \right)\right\}^2\right)^2\right] \cdot \mathbb{E}\left[\left(\Gamma w_j \tau^{\prime}\left(\Gamma Z \right)\right)^4\right]\right\}^{1/2} \bigg\}^{1/2} \\
\\
%%%%
& = \frac{c_M}{\sqrt{n}}\sigma_w^2 \bigg\{2 \Gamma^4 \left\{\mathbb{E}\left[\left( \tau^{\prime}\left(\Gamma Z \right) \tau^{\prime\prime}\left(\Gamma Z \right)\right)^2\right] \cdot \mathbb{E}\left[\left( \tau \left(\Gamma Z \right) \tau^{\prime}\left(\Gamma Z \right)\right)^2\right]\right\}^{1/2} \\
+ & \Gamma^2 \left\{\mathbb{E}\left[\left( \tau^{\prime}\left(\Gamma Z \right)\right)^4\right] \cdot \mathbb{E}\left[\left( \tau\left(\Gamma Z \right)\right)^4\right]\right\}^{1/ 2}\\
+ \bigg\{ & \left( \Gamma^4 \mathbb{E} \left[ \left\{ \tau^{\prime}\left(\Gamma Z \right) \right\} ^4 \right] + 2 \Gamma^6  \mathbb{E}\left[ \left\{\tau^{\prime}\left(\Gamma Z \right)\right\}^2 \left\{ \tau^{\prime \prime}\left(\Gamma Z \right) \right\}^2 \right] + 3 \Gamma^8 \mathbb{E} \left[ \left\{ \tau^{\prime\prime} \left(\Gamma Z \right) \right\}^4 \right] \right) \\
\times & 3 \Gamma^4 \cdot \mathbb{E} \left[ \left\{ \tau^{\prime} \left(\Gamma Z \right) \right\}^4 \right] \bigg\}^{1/2} \bigg\}^{1/2} \\
\\
%%%%
& = \frac{c_M}{\sqrt{n}} \sigma_w^2 \bigg\{2 \Gamma^4 \left\{\mathbb{E}\left[| \tau^{\prime}\left(\Gamma Z \right)|^2 |\tau^{\prime\prime}\left(\Gamma Z \right)|^2\right] \cdot \mathbb{E}\left[|\tau \left(\Gamma Z \right)|^{2} |\tau^{\prime}\left(\Gamma Z \right)|^2\right]\right\}^{1/2} \\
+ &  \Gamma^2 \left\{\mathbb{E}\left[| \tau^{\prime}\left(\Gamma Z \right)|^4\right] \cdot \mathbb{E}\left[| \tau\left(\Gamma Z \right)|^4\right]\right\}^{1/ 2}\\
+ & \big\{\left(\Gamma^4 \mathbb{E}\left[|\tau^{\prime}\left(\Gamma Z \right)|^4 \right] + 2 \Gamma^6 \cdot \mathbb{E}\left[ |\tau^{\prime}\left(\Gamma Z \right)|^2 |\tau^{\prime \prime}\left(\Gamma Z \right)|^2 \right] + 3 \Gamma^8 \cdot \mathbb{E}\left[|\tau^{\prime\prime}\left(\Gamma Z \right)|^4 \right]\right) \\
& \quad \quad \times 3 \Gamma^4 \cdot \mathbb{E}\left[ |\tau^{\prime}\left(\Gamma Z \right)|^4 \right]\big\}^{1/2}\bigg\}^{1/2}, \\
\end{align*}
where $Z \sim \mathcal{N}(0,1)$. But since $\tau$ is polynomially bounded and the square root is an increasing function, we can bound this expression by 
\begin{align*}
    \frac{c_M}{\sqrt{n}}\sigma_w^2 \bigg\{2 \Gamma^4 & \left\{\mathbb{E}\left[(\alpha+\beta |\Gamma Z|^{\gamma})^4\right] \cdot \mathbb{E}\left[(\alpha+\beta |\Gamma Z|^{\gamma})^4\right]\right\}^{1/2} \\
    + & \Gamma^2 \left\{\mathbb{E}\left[(\alpha+\beta |\Gamma Z|^{\gamma})^4\right] \cdot \mathbb{E}\left[(\alpha+\beta |\Gamma Z|^{\gamma})^4\right] \right\}^{1/ 2}\\
+ & \Gamma^4 \left\{\sqrt{3(1 + 2\Gamma^2 + 3\Gamma^4)} \cdot \mathbb{E}\left[(\alpha+\beta |\Gamma Z|^{\gamma})^4\right] \cdot \mathbb{E}\left[(\alpha+\beta |\Gamma Z|^{\gamma})^4\right] \right\}^{1/ 2} \bigg\}^{1/2} \\
\\
= \frac{c_M}{\sqrt{n}}\sigma_w^2 & \sqrt{\Gamma^2 + \Gamma^4(2 + \sqrt{3(1 + 2\Gamma^2 + 3\Gamma^4)}} \left\{\mathbb{E}\left[(\alpha+\beta |\Gamma Z|^{\gamma})^4\right]\right\}^{1/2} \\
\\
= \frac{c_M}{\sqrt{n}} \sigma_w^2 & \sqrt{\Gamma^2 + \Gamma^4(2 + \sqrt{3(1 + 2\Gamma^2 + 3\Gamma^4)}} \cdot \norm{\alpha+\beta |\Gamma Z|^\gamma}_{L^4}^2, \\
\end{align*}
where $Z \sim \mathcal{N}(0,1)$.

\section{Proof of Theorem \ref{theo:slnn_multi}}\label{teo3}
The proof is based on Theorem \ref{theo:2.3}.
Recall that  $$F_{i}:= \frac{1}{n^{1/2}} \sigma_w \sum_{j=1}^n w_j \tau(\sigma_w \langle w_j^{(0)}, \boldsymbol{x_i}\rangle+\sigma_b b_j^{(0)})+\sigma_b b.$$ 
Since $F_1,\dots,F_p$ are functions of the iid standard normal random variables
$\{w_j,w^{(0)}_{jl},b_j^{(0)},b:j=1,\dots,n,l=1,\dots,d\}$, then we can apply Theorem \ref{theo:2.3} to the random vector $F = [F_1 \; \cdots \;  F_p]$. The upper bound in (\ref{vid2}) depends on the first and second derivatives of the $F_i$'s  with respect to all their arguments. However, the derivatives with respect to $b$ give no contributions, since, for every $i=1,\dots,p$,  $\nabla^2_{b,\cdot}F_i$ is the zero vector. Moreover, the terms $w_j\tau(\sigma_w \langle w_j^{(0)}, \boldsymbol{x_i}\rangle+\sigma_b b_j^{(0)})$ are iid, across $j$, and give the same contribution to the upper bound. Hence, we can write that 
$$
d_{W_1}(F,N)\leq 2\sigma_w^2
\sqrt{\frac p n} \norm{C^{-1}}_2\norm{C}_2
\sqrt{
\sum_{i,k=1}^p D_{ik}},
$$
where
$$
D_{ik}=\sum_{l,m}
\left\{ 
\mathbb{E}\left[\left(\langle \nabla^2_{l,\cdot}\tilde F_i, \nabla^2_{m,\cdot}\tilde F_i \rangle \right)^2 \right] \right\}^{1/2}\left\{ \mathbb{E}\left[\left( \nabla_{l}\tilde F_k \nabla_{m}\tilde F_k \right)^2 \right] 
\right\}^{1/2},
$$
where
$$
[\tilde F_1 \;\dots \; \tilde F_p]\stackrel{d}{=}[w_j\tau(\sigma_w \langle w_j^{(0)}, \boldsymbol{x_1}\rangle+\sigma_b b_j^{(0)}) \; \dots \;w_j\tau(\sigma_w \langle w_j^{(0)}, \boldsymbol{x_p}\rangle+\sigma_b b_j^{(0)})],
$$
and $\nabla_{l}$,$\nabla_{m}$, $\nabla^2_{l,\cdot}$ and $\nabla^2_{m,\cdot}$ denote the derivatives with respect to all the arguments.
We can represent $\tilde F_i$ as 
%The dependence of the upper bound on the factor  $\sigma_w^2/\sqrt{n}$ follows again from the fact that the parameters are iid across the index $j$ and we are summing over all possible $j = 1, \cdots, n$. Let us focus on the constant $\tilde{K}$ then. In order to compute that, it is sufficient to consider 
$$\tilde{F}_i = w \cdot \tau(Y_i),$$
where $Y_i:= \langle \tilde{w}^{(0)}, \boldsymbol{\tilde{x}_i}\rangle = \sum_{s=1}^d \tilde{w}_s^{(0)} \tilde{x}_{is}$, with $\boldsymbol{\tilde{x}_i}:= [\sigma_w \boldsymbol{x}_i^T, \sigma_b]^T$, ${\tilde{w}^{(0)}}:= [{w^{(0)}}^T, b^{(0)}]^T$, and $w,\tilde w^{(0)}_1,\dots,\tilde w^{(0)}_d,b^{(0)}$ iid standard normal random variables.
%, and suppressed the term $\sigma_b b$ since all its double derivatives are zero, hence it will not show up in the bound given by Theorem \ref{theo:2.3}. 
The gradient and the Hessian of $\tilde{F}$ with respect to the parameters $w$ and $\tilde{w}_s^{(0)}$ are
$$
\begin{cases}
\frac{\partial \tilde{F}_i}{\partial w} = \tau(Y_i) \\
\\
\frac{\partial \tilde{F}_i}{\partial w_s^{(0)}} = w \tau'(Y_i)\tilde{x}_{is} \\
\\
\nabla^2_{w, w}\tilde{F}_i= 0 \\
\\
\nabla^2_{w, \tilde{w}_s^{(0)}}\tilde{F_i} = \tau^{\prime}(Y_i) \tilde{x}_{is} \\
\\
\nabla^2_{\tilde{w}_s^{(0)}, \tilde{w}_t^{(0)}}\tilde{F_i} = w \tau^{''}(Y_i) \tilde{x}_{is}\tilde{x}_{it}. \\
\end{cases}
$$
This implies that 
%the $(i,k)$-th term inside the summand over all possible $i,k = 1, \ldots, p$ is given by
\begin{align*}
    D_{ik}&=\Bigg\{ \mathbb{E}\bigg[\left(\sum_{s=1}^d \nabla^2_{w, \tilde{w}_s^{(0)}}\tilde{F_i} \cdot \nabla^2_{w, \tilde{w}_s^{(0)}}\tilde{F_i} \right)^2\bigg]\Bigg\}^{1/2} \Bigg\{\mathbb{E}\left[\left(\frac{\partial \tilde{F}_k}{\partial w} \cdot \frac{\partial \tilde{F}_k}{\partial w}\right)^2\right]\Bigg\}^{1/2} \\
    &+ \sum_{j,j' = 1}^d \Bigg\{\mathbb{E}\bigg[\left(\nabla^2_{w, \tilde{w}_j^{(0)}}\tilde{F_i} \cdot \nabla^2_{w, \tilde{w}_{j'}^{(0)}}\tilde{F_i} +\sum_{s=1}^d \nabla^2_{\tilde{w}^{(0)}_{j}, \tilde{w}_s^{(0)}}\tilde{F_i} \cdot \nabla^2_{\tilde{w}^{(0)}_{j'}, \tilde{w}_s^{(0)}}\tilde{F_i} \right)^2\bigg]\Bigg\}^{1/2} \\
    & \quad \quad \quad \quad \times \Bigg\{\mathbb{E}\left[\left(\frac{\partial \tilde{F}_k}{\partial \tilde{w}^{(0)}_{j}} \cdot \frac{\partial \tilde{F}_k}{\partial \tilde{w}^{(0)}_{j'}}\right)^2\right]\Bigg\}^{1/2}\\
    &+ 2\sum_{j = 1}^d \Bigg\{\mathbb{E}\bigg[\left(\sum_{s=1}^d \nabla^2_{w, \tilde{w}_s^{(0)}}\tilde{F_i} \cdot \nabla^2_{\tilde{w}^{(0)}_{j}, \tilde{w}_s^{(0)}}\tilde{F_i} \right)^2\bigg]\Bigg\}^{1/2} \Bigg\{\mathbb{E}\left[\left(\frac{\partial \tilde{F}_k}{\partial w} \cdot \frac{\partial \tilde{F}_k}{\partial \tilde{w}^{(0)}_{j}}\right)^2\right]\Bigg\}^{1/2}\\
    %%%%%%%%%%%%%%%%%%%%%%%%%%%%%%%
    = \Bigg\{&\mathbb{E}\bigg[\left(\sum_{s=1}^d \tau'(Y_i)^{2} \tilde{x}_{is}^2\right)^2\bigg]\Bigg\}^{1/2} \Bigg\{\mathbb{E}\left[\left(\tau(Y_k) \right)^4\right]\Bigg\}^{1/2} \\
    &+ \sum_{j,j' = 1}^d \Bigg\{\mathbb{E}\bigg[\left(\tau'(Y_i)^2 \tilde{x}_{ij} \tilde{x}_{ij'} + \sum_{s=1}^d w^2 \tau^{''}(Y_i)^{2} \tilde{x}_{ij} \tilde{x}_{ij'}\tilde{x}_{is}^2  \right)^2\bigg]\Bigg\}^{1/2} \Bigg\{\mathbb{E}\left[\left(w^2 \tau'(Y_k)^2\tilde{x}_{kj} \tilde{x}_{kj'} \right)^2\right]\Bigg\}^{1/2}\\
    &+ 2\sum_{j = 1}^d \Bigg\{\mathbb{E}\bigg[\left(\sum_{s=1}^d \tau'(Y_i)\tilde{x}_{is} w \tau^{''}(Y_i)\tilde{x}_{ij}\tilde{x}_{is}\right)^2\bigg]\Bigg\}^{1/2} \Bigg\{\mathbb{E}\left[\left(\tau(Y_k)w\tau'(Y_k)\tilde{x}_{kj}\right)^2\right]\Bigg\}^{1/2}\\
    %%%%%%%%%%%%%%%%%%%%%%%%%%%%%%%%%
    = ||&\tilde{\boldsymbol{x}_i}||^2 \norm{\tau'(Y_i)}_{L_4}^2 \norm{\tau(Y_k)}_{L_4}^2 \\
    &+ \sum_{j,j' = 1}^d |\tilde{x}_{ij} \tilde{x}_{ij'}|\Bigg\{\mathbb{E}\bigg[\left(\tau'(Y_i)^2 + w^2 \tau^{''}(Y_i)^2||\boldsymbol{\tilde{x}}_i||^2\right)^2\bigg]\Bigg\}^{1/2} \sqrt{3}|\tilde{x}_{kj} \tilde{x}_{kj'}| \norm{\tau'(Y_k)}_{L^4}^2\\
    &+ 2\sum_{j = 1}^d |\tilde{x}_{ij}| |\tilde{x}_{kj}| \norm{\boldsymbol{\tilde{x}_i}}^2 \Bigg\{\mathbb{E}\bigg[\left(\tau'(Y_i)\tau^{''}(Y_i)\right)^2\bigg]\Bigg\}^{1/2} \Bigg\{\mathbb{E}\bigg[\left(\tau(Y_k)\tau^{'}(Y_k)\right)^2\bigg]\Bigg\}^{1/2}\\
    %%%%%%%%%%%%%%%%%%%%%%%%%%%%%%%%%
    = ||&\tilde{\boldsymbol{x}_i}||^2 \norm{\tau'(Y_i)}_{L_4}^2 \norm{\tau(Y_k)}_{L_4}^2 + \sum_{j,j' = 1}^d \sqrt{3}|\tilde{x}_{kj} \tilde{x}_{kj'}| |\tilde{x}_{ij} \tilde{x}_{ij'}| \norm{\tau'(Y_k)}_{L^4}^2\\
    & \quad \quad \quad \quad \times \Bigg\{\norm{\tau'(Y_i)}_{L_4}^4 + 3\norm{\boldsymbol{\tilde{x}_i}}^4 \norm{\tau^{''}(Y_i)}_{L_4}^4 + 2\norm{\boldsymbol{\tilde{x}_i}}^2 \norm{\tau^{'}(Y_i)\tau^{''}(Y_i)}_{L_2}^2 \Bigg\}^{1/2} \\
    &+ 2\sum_{j = 1}^d |\tilde{x}_{ij}| |\tilde{x}_{kj}| \norm{\boldsymbol{\tilde{x}_i}}^2 \norm{\tau^{'}(Y_i)\tau^{''}(Y_i)}_{L_2} \norm{\tau(Y_k)\tau^{'}(Y_k)}_{L_2}\\
    %%%%%%%%%%%%%%%%%%%%%%%%%%%%%%%%%
    = ||&\tilde{\boldsymbol{x}_i}||^2 \norm{\tau'(Y_i)}_{L_4}^2 \norm{\tau(Y_k)}_{L_4}^2 + \sqrt{3} \norm{\tau'(Y_k)}_{L^4}^2 \left(\sum_{j = 1}^d |\tilde{x}_{ij} \tilde{x}_{kj}| \right)^2\\
    & \quad \quad \quad \quad \times \Bigg\{\norm{\tau'(Y_i)}_{L_4}^4 + 3\norm{\boldsymbol{\tilde{x}_i}}^4 \norm{\tau^{''}(Y_i)}_{L_4}^4 + 2\norm{\boldsymbol{\tilde{x}_i}}^2 \norm{\tau^{'}(Y_i)\tau^{''}(Y_i)}_{L_2}^2 \Bigg\}^{1/2} \\
    &+ 2 \norm{\boldsymbol{\tilde{x}_i}}^2 \norm{\tau^{'}(Y_i)\tau^{''}(Y_i)}_{L_2} \norm{\tau(Y_k)\tau^{'}(Y_k)}_{L_2} \left( \sum_{j = 1}^d |\tilde{x}_{ij}| |\tilde{x}_{kj}| \right)\\
    %%%%%%%%%%%%%%%%%%%%%%%%%%%%%%%%%
    \overset{\text{Holder ineq.}}{\leq} & ||\tilde{\boldsymbol{x}_i}||^2 \norm{\tau'(Y_i)}_{L_4}^2 \norm{\tau(Y_k)}_{L_4}^2 + \sqrt{3}  \norm{\tau'(Y_k)}_{L^4}^2 \left(\sum_{j = 1}^d |\tilde{x}_{ij} \tilde{x}_{kj}| \right)^2\\
    & \quad \quad \quad \quad \times \Bigg\{\norm{\tau'(Y_i)}_{L_4}^4 + 3\norm{\boldsymbol{\tilde{x}_i}}^4 \norm{\tau^{''}(Y_i)}_{L_4}^4 + 2\norm{\boldsymbol{\tilde{x}_i}}^2 \norm{\tau^{'}(Y_i)}_{L_4}^2 \norm{\tau^{''}(Y_i)}_{L_4}^2 \Bigg\}^{1/2} \\
    &+ 2 \norm{\boldsymbol{\tilde{x}_i}}^2 \norm{\tau^{'}(Y_i)}_{L_4}\norm{\tau^{''}(Y_i)}_{L_4} \norm{\tau(Y_k)}_{L_4} \norm{\tau^{'}(Y_k)}_{L_4} \left( \sum_{j = 1}^d |\tilde{x}_{ij}| |\tilde{x}_{kj}| \right)\\
    %%%%%%%%%%%%%%%%%%%
    \overset{\text{polynom. bounded}}{\leq} &||\tilde{\boldsymbol{x}_i}||^2 \norm{\alpha+\beta|Y_i|^{\gamma}}_{L_4}^2 \norm{\alpha+\beta|Y_k|^{\gamma}}_{L_4}^2 \\
    &+ \sqrt{3} \Bigg\{(1 + 2\norm{\boldsymbol{\tilde{x}_i}}^2 + 3\norm{\boldsymbol{\tilde{x}_i}}^4) \norm{\alpha+\beta|Y_i|^{\gamma}}_{L_4}^4 \Bigg\}^{1/2}  \norm{\alpha+\beta|Y_k|^{\gamma}}_{L^4}^2 \left(\sum_{j = 1}^d |\tilde{x}_{ij} \tilde{x}_{kj}| \right)^2\\
    &+ 2 \norm{\boldsymbol{\tilde{x}_i}}^2 \norm{\alpha+\beta|Y_i|^{\gamma}}_{L_4}^2 \norm{\alpha+\beta|Y_k|^{\gamma}}_{L_4}^2 \left( \sum_{j = 1}^d |\tilde{x}_{ij} \tilde{x}_{kj}| \right)\\
    %%%%%%%%%%%%%%%%%%%%%%%%%%%%%%%
    =& \Bigg\{||\tilde{\boldsymbol{x}_i}||^2 + \sqrt{3(1 + 2\norm{\boldsymbol{\tilde{x}_i}}^2 + 3\norm{\boldsymbol{\tilde{x}_i}}^4)}  \left(\sum_{j = 1}^d |\tilde{x}_{ij} \tilde{x}_{kj}| \right)^2 + 2 \norm{\boldsymbol{\tilde{x}_i}}^2 \left( \sum_{j = 1}^d |\tilde{x}_{ij} \tilde{x}_{kj}| \right) \Bigg\} \\
    & \times \norm{\alpha+\beta|Y_i|^{\gamma}}_{L_4}^2 \norm{\alpha+\beta|Y_k|^{\gamma}}_{L_4}^2.\\
    %%%%%%%%%%%%%%%%%%%
\end{align*}
Now, traducing everything back to the original variables $\{\boldsymbol{x}_i\}_{i \in [d]}$, we have that 
$$
\begin{cases}
    \sum_{j = 1}^d |\tilde{x}_{ij}| |\tilde{x}_{kj}| = \sigma_w^2 \sum_{j = 1}^d |x_{ij}| |x_{kj}| + \sigma_b^2 =: \Gamma_{ik}\\
    \\
    ||\tilde{\boldsymbol{x}_i}||^2 = \sigma_w^2 ||\boldsymbol{x}_i||^2 + \sigma_b^2 =: \Gamma_i^2.
\end{cases}
$$
Hence,
%hence our $(i,k)$-th term becomes 
\begin{align*}
  D_{ik} &\leq  (\Gamma_i^2 +  \sqrt{3 (1 + 2\Gamma_i^2 + 3\Gamma_i^4)}\Gamma_{ik}^2 + 2\Gamma_i^2 \Gamma_{ik} ) \norm{\alpha+\beta|Y_i|^{\gamma}}_{L_4}^2 \norm{\alpha+\beta|Y_k|^{\gamma}}_{L_4}^2, \\ 
\end{align*}
with $Y \sim \mathcal{N}(0, \sigma_b^2 \mathbf X^T\mathbf X + \sigma_b^2 \mathbf{1} \mathbf{1}^T)$. Summing over all possible $i,k = 1, \ldots, p$ and taking the square root leads to
$$
d_{W_1} \left(F, N\right) \leq 2 \sigma_w^2  \frac{\lambda_1(C)}{\lambda_p(C)} \sqrt{\frac{p}{n}}\Tilde{K},
$$
with
\begin{align*}
     \Tilde{K} &= \bigg\{\sum_{i,k = 1}^p (\Gamma_i^2 + \sqrt{3(1+2\Gamma_i^2 + 3\Gamma_i^4)}\Gamma_{ik}^2 + 2\Gamma_i^2 \Gamma_{ik})\norm{\alpha+\beta |Y_i|^{\gamma}}_{L^4}^2 \norm{\alpha+\beta |Y_k|^{\gamma}}_{L^4}^2\bigg\}^{1/2}\\
     &= \bigg\{\sum_{i,k = 1}^p (\Gamma_i^2 + \sqrt{3(1+2\Gamma_i^2 + 3\Gamma_i^4)}\Gamma_{ik}^2 + 2\Gamma_i^2 \Gamma_{ik})\norm{\alpha+\beta |\Gamma_i Z|^{\gamma}}_{L^4}^2 \norm{\alpha+\beta |\Gamma_k Z|^{\gamma}}_{L^4}^2\bigg\}^{1/2},
\end{align*}
with $Z\sim \mathcal N(0,1)$, which concludes the proof.

\section{Gradient and Hessian for the output of a deep NN}\label{sec:derivatives}
The first step of the proofs of Theorem \ref{theo:2layerNN} 
and \ref{theo:2_hidden_multi} 
is computing the gradient and the Hessian of $\tilde{F}$.
\subsection{$L = 2$}
If $L=2$, then
$$
\tilde F=\sigma_w n^{-1/2}\sum_{i=1}^n w_i\tau(f_i^{(2)}(\xv,n)),
$$
where
\begin{align*}
&f_i^{(2)}(\xv,n)=\sigma_w n^{-1/2}\sum_{j=1}^n w_{i,j}^{(1)}   \tau(f_j^{(1)}(\xv)),\\
&f_j^{(1)}(\xv)=\Gamma Y_j,
\end{align*}
with
$\Gamma^2=\sigma_w^2 ||\xv||_2^2$. The partial derivatives are given by
\[
      \begin{cases}
          \derivativ{\tilde{F}}{w_i} = \sigma_w n^{-1/2}\tau \left(f_i^{(2)}(\xv,n)\right) \\
          \\
        \derivativ{\tilde{F}}{w^{(1)}_{i,j}} = \left(\sigma_w n^{-1/2}\right)^2 w_i \tau\left(f_j^{(1)}(\xv)\right)\tau^{'}\left(f_i^{(2)}(\xv,n)\right)\\
        \\
        \derivativ{\tilde{F}}{Y_j} =  \Gamma\left(\sigma_w n^{-1/2}\right)^2\tau^{'}\left(f_j^{(1)}(\xv)\right)\sum_{a=1}^n w_a w^{(1)}_{a,j}\tau^{'}\left(f_a^{(2)}(\xv,n)\right)
      \end{cases}
\]   
\[
\begin{cases}
    \hessian{\tilde{F}}{w_i}{w_j} = 0\\
    \\
    \hessian{\tilde{F}}{w_i}{w_{k,j}^{(1)}} = \delta_{ik}\left(\sigma_w n^{-1/2}\right)^2\tau\left(f_j^{(1)}(\xv)\right)\tau^{'}\left(f_i^{(2)}(\xv,n)\right)\\
        \\
    \hessian{\tilde{F}}{w_i}{Y_j} = \Gamma\left(\sigma_w n^{-1/2}\right)^2\tau^{'}\left(f_j^{(1)}(\xv)\right)\tau^{'}\left(f_i^{(2)}(\xv,n)\right)\\
    \\
    \hessian{\tilde{F}}{w_{i,j}^{(1)}}{w_{k,h}^{(1)}} = \delta_{ik}\left(\sigma_w n^{-1/2}\right)^3 w_i \tau\left(f_j^{(1)}(\xv)\right)\tau\left(f_h^{(1)}(\xv)\right)\tau^{''}\left(f_i^{(2)}(\xv,n)\right)\\
    \\
    \hessian{\tilde{F}}{w_{i,j}^{(1)}}{Y_k} = \Gamma\left(\sigma_w n^{-1/2}\right)^2 w_i\tau^{'}\left(f_k^{(1)}(\xv)\right)\left[\Gamma\sigma_w n^{-1/2}w^{(1)}_{i,k}\tau\left(f_j^{(1)}(\xv)\right)\tau^{''}\left(f_i^{(2)}(\xv,n)\right)\right.\ + \\
    \hspace{4cm}+ \,\,\left.\ \delta_{jk}\tau^{'}\left(f_i^{(2)}(\xv,n)\right)\right]\\
    \\
    \hessian{\tilde{F}}{Y_j}{Y_k} = \left(\Gamma\sigma_w n^{-1/2}\right)^2\left[\sigma_w n^{-1/2}\tau^{'}\left(f_j^{(1)}(\xv)\right)\tau^{'}\left(f_k^{(1)}(\xv)\right)\sum_{a=1}^n w_a w^{(1)}_{a,j}w^{(1)}_{a,k}\tau^{''}\left(f_a^{(2)}(\xv,n)\right)\right.\ + \\
        \,\,\left.\ \hspace{4cm} +\delta_{jk}\tau^{''}\left(f_j^{(1)}(\xv)\right)\sum_{a=1}^n w_a w^{(1)}_{a,j}\tau^{'}\left(f_a^{(2)}(\xv,n)\right)\right],
\end{cases}
\]
and this for all $i,j,k \in [n]$. \\ 

\subsection{General $L$}
In this section will compute the gradient and the hessian of the NN defined in (\ref{eq:output}) 
% and \eqref{defi.DNN:weights_E12} 
for a general $L$, not necessarily $L =2$ as in the previous one. Application of Theorem \ref{theo:mutliNN} requires computing the gradient and the hessian of $F_i = f^{(L+1)}(\xv_i)$,  and it will be sufficient to use all the computations of this section with $F_i$ in place of $F$, and $\xv_i$ in place of $\boldsymbol{x}.$ To simplify the notation we write $f_i^{(l)}(\xv):=f_i^{(l)}(\xv,n)$ for every $i$ and $l$.\\
      
      It is useful to start by computing the following derivatives
      \begin{align*}
        \derivativ{F}{f^{(L)}_{i_L}(\xv)} &= \frac{\sigma_w}{\sqrt{n}} w_{i_{L}}\tau^{'}\left(f^{(L)}_{i_L}(\xv)\right)\\
        \derivativ{f^{(l+1)}_{i_{l+1}}(\xv)}{f^{(l)}_{i_{l}}(\xv)} &= \frac{\sigma_w}{\sqrt{n}} w^{(l)}_{i_{l+1},i_{l}}\tau^{'}\left(f^{(l)}_{i_{l}}(\xv)\right) \quad\forall\,l\in\{1,\dots,L-1\}\\
        \derivativ{f^{(l+1)}_{i_{l+1}}(\xv)}{w^{(l)}_{i_{l}, j_{l}}} &= \delta_{i_{l+1}i_{l}}\frac{\sigma_w}{\sqrt{n}}\tau\left(f^{(l)}_{j_{l}}(\xv)\right) \quad\forall\,l\in\{1,\dots,L-1\}\\
        \derivativ{f^{(1)}_{i_{1}}(\xv)}{w^{(0)}_{i_{0}, j_{0}}} &= \delta_{i_{1}i_{0}}\sigma_w x_{j_0},
      \end{align*}
      which hold true for all $i_L,\dots, i_0, j_L, \dots, j_1 = 1,\dots,n$ and $j_0=1,\dots,d$.\\
      Using the chain rule, it is easy but a little tedious to compute
      \begin{align*}
        \derivativ{F}{w_{i_L}} &= \frac{\sigma_w}{\sqrt{n}}\tau\left(f^{(L)}_{i_L}(\xv)\right)\\
        \derivativ{F}{w^{(L-1)}_{i_{L-1}, j_{L-1}}} &= \left(\frac{\sigma_w}{\sqrt{n}}\right)^2 w_{i_{L-1}}\tau^{'}\left(f^{(L)}_{i_{L-1}}(\xv)\right)\tau\left(f^{(L-1)}_{j_{L-1}}(\xv)\right)\\
        \derivativ{F}{w^{(L-2)}_{i_{L-2}, j_{L-2}}} &= \left(\frac{\sigma_w}{\sqrt{n}}\right)^3 \tau^{'}\left(f^{(L-1)}_{i_{L-2}}(\xv)\right)\tau\left(f^{(L-2)}_{j_{L-2}}(\xv)\right) \sum_{i_L=1}^{n} w_{i_{L}}\tau^{'}\left(f^{(L)}_{i_L}(\xv)\right)w^{(L-1)}_{i_{L},i_{L-2}}\\
        \derivativ{F}{w^{(l)}_{i_{l}, j_{l}}} &= \left(\frac{\sigma_w}{\sqrt{n}}\right)^{L-l+1}\tau^{'}\left(f^{(l+1)}_{i_{l}}(\xv)\right) \tau\left(f^{(l)}_{j_{l}}(\xv)\right)\times\\
        &\times\sum_{i_L,\dots,i_{l+2}=1}^{n} w_{i_{L}}\tau^{'}\left(f^{(L)}_{i_L}(\xv)\right)\left(\prod_{s=l+2}^{L-1} w^{(s)}_{i_{s+1},i_{s}}\tau^{'}\left(f^{(s)}_{i_{s}}(\xv)\right)\right) w^{(l+1)}_{i_{l+2},i_{l}}\\
        \derivativ{F}{w^{(0)}_{i_{0}, j_{0}}} &= \sigma_w\left(\frac{\sigma_w}{\sqrt{n}}\right)^{L}\tau^{'}\left(f^{(1)}_{i_{0}}(\xv)\right)x_{j_0}\times\\
        &\times\sum_{i_L,\dots,i_{2}=1}^{n} w_{i_{L}}\tau^{'}\left(f^{(L)}_{i_L}(\xv)\right)\left(\prod_{s=2}^{L-1} w^{(s)}_{i_{s+1},i_{s}}\tau^{'}\left(f^{(s)}_{i_{s}}(\xv)\right)\right) w^{(1)}_{i_{2},i_{0}}
      \end{align*}
      for all $i_L,\dots, i_0, j_L, \dots, j_1 = 1,\dots,n$, $j_0=1,\dots,d$ and $l=1,\dots,L-3$. \\
      
As for the Hessian, we have
      \begin{align*}
        \hessian{F}{w_{i_L}}{w_{j_L}} &= 0\\
        \hessian{F}{w_{i_L}}{w^{(L-1)}_{i_{L-1}, j_{L-1}}} &=  \delta_{i_{L}i_{L-1}}\left(\frac{\sigma_w}{\sqrt{n}}\right)^2\tau^{'}\left(f^{(L)}_{i_L}(\xv)\right)\tau\left(f^{(L-1)}_{j_{L-1}}(\xv)\right)\\
        \hessian{F}{w_{i_L}}{w^{(L-2)}_{i_{L-2}, j_{L-2}}}
        &= \left(\frac{\sigma_w}{\sqrt{n}}\right)^3\tau^{'}\left(f^{(L)}_{i_L}(\xv)\right)\tau^{'}\left(f^{(L-1)}_{i_{L-2}}(\xv)\right)\tau\left(f^{(L-2)}_{j_{L-2}}(\xv)\right)w^{(L-1)}_{i_{L},i_{L-2}}\\
        \hessian{F}{w_{i_L}}{w^{(l)}_{i_{l}, j_{l}}}
        &= \left(\frac{\sigma_w}{\sqrt{n}}\right)^{L-l+1}\tau^{'}\left(f^{(L)}_{i_L}(\xv)\right)\tau^{'}\left(f^{(l+1)}_{i_{l}}(\xv)\right) \tau\left(f^{(l)}_{j_{l}}(\xv)\right)\times\\
        &\times\sum_{i_{L-1},\dots,i_{l+2}=1}^{n} \left(\prod_{s=l+2}^{L-1} w^{(s)}_{i_{s+1},i_{s}}\tau^{'}\left(f^{(s)}_{i_{s}}(\xv)\right)\right) w^{(l+1)}_{i_{l+2},i_{l}}\\
        \hessian{F}{w_{i_L}}{w^{(0)}_{i_{0}, j_{0}}}
        &= \sigma_w\left(\frac{\sigma_w}{\sqrt{n}}\right)^{L}\tau^{'}\left(f^{(L)}_{i_L}(\xv)\right)\tau^{'}\left(f^{(1)}_{i_{0}}(\xv)\right)x_{j_0}\times\\
        &\times\sum_{i_{L-1},\dots,i_{2}=1}^{n} \left(\prod_{s=2}^{L-1} w^{(s)}_{i_{s+1},i_{s}}\tau^{'}\left(f^{(s)}_{i_{s}}(\xv)\right)\right) w^{(1)}_{i_{2},i_{0}}
      \end{align*}
       As for two generic weights $w^{(l)}_{i_{l}, j_{l}}, w^{(m)}_{j_{m}, \tilde{j}_{m}}$ for $l \in \{ 0, \ldots, L-1\}$, we have
      \begin{align*}
        \hessian{F}{w^{(l)}_{i_{l}, j_{l}}}{w^{(m)}_{j_{m}, \tilde{j}_{m}}} 
        = & \left(\frac{\sigma_w}{\sqrt{n}}\right)^{L-l+1}\derivativ{}{w^{(m)}_{j_{m}, \tilde{j}_{m}}}\left[\tau^{'}\left(f^{(l+1)}_{i_{l}}(\xv)\right)\right]\tau\left(f^{(l)}_{j_{l}}(\xv)\right)\times\\
        \times&\sum_{i_L,\dots,i_{l+2}=1}^{n} w_{i_{L}}\tau^{'}\left(f^{(L)}_{i_L}(\xv)\right)\left(\prod_{s=l+2}^{L-1} w^{(s)}_{i_{s+1},i_{s}}\tau^{'}\left(f^{(s)}_{i_{s}}(\xv)\right)\right) w^{(l+1)}_{i_{l+2},i_{l}} + \\
        +& \left(\frac{\sigma_w}{\sqrt{n}}\right)^{L-l+1}\tau^{'}\left(f^{(l+1)}_{i_{l}}(\xv)\right)\derivativ{}{w^{(m)}_{j_{m}, \tilde{j}_{m}}}\left[\tau\left(f^{(l)}_{j_{l}}(\xv)\right)\right]\times\\
        \times &\sum_{i_L,\dots,i_{l+2}=1}^{n} w_{i_{L}}\tau^{'}\left(f^{(L)}_{i_L}(\xv)\right)\left(\prod_{s=l+2}^{L-1} w^{(s)}_{i_{s+1},i_{s}}\tau^{'}\left(f^{(s)}_{i_{s}}(\xv)\right)\right) w^{(l+1)}_{i_{l+2},i_{l}} + \\
        + & \left(\frac{\sigma_w}{\sqrt{n}}\right)^{L-l+1}\tau^{'}\left(f^{(l+1)}_{i_{l}}(\xv)\right)\tau\left(f^{(l)}_{j_{l}}(\xv)\right)\times\\
        \times & \left[ \sum_{i_L,\dots,i_{l+2}=1}^{n} \beta_{i_L, \ldots, i_{l+2}}  \derivativ{}{w^{(m)}_{j_{m}, \tilde{j}_{m}}} \alpha_{i_L, \ldots, i_{l+2}} + \beta_{i_L, \ldots, i_{l+2}}  \derivativ{}{w^{(m)}_{j_{m}, \tilde{j}_{m}}} \alpha_{i_L, \ldots, i_{l+2}}\right]  \\
      \end{align*}
where 
$$\alpha_{i_L, \ldots, i_{l+2}} := \left(w_{i_{L}} w^{(l+1)}_{i_{l+2},i_{l}} \prod_{s=l+2}^{L-1} w^{(s)}_{i_{s+1},i_{s}}\right) $$ and $$\beta_{i_L, \ldots, i_{l+2}} := \left(\tau^{'}\left(f^{(L)}_{i_L}(\xv)\right)\prod_{s=l+2}^{L-1} \tau^{'}\left(f^{(s)}_{i_{s}}(\xv)\right)\right),$$ so that 
\begin{align*}
    \derivativ{}{w^{(m)}_{j_{m}, \tilde{j}_{m}}} \alpha_{i_L, \ldots, i_{l+2}} &= \delta_{j_m,i_{m+1}} \delta_{\tilde{j}_m, i_m} \mathbbm{1}\{m > l+1\} \left(\frac{w_{i_{L}} w^{(l+1)}_{i_{l+2},i_{l}}}{w^{(m)}_{j_{m}, \tilde{j}_{m}} } \prod_{s=l+2}^{L-1} w^{(s)}_{i_{s+1},i_{s}}\right) \\
    & + \delta_{j_m,i_{l+2}} \delta_{\tilde{j}_m, i_l} \mathbbm{1}\{m = l+1\} \left(w_{i_{L}} \prod_{s=l+2}^{L-1} w^{(s)}_{i_{s+1},i_{s}}\right),\\
    \derivativ{}{w^{(m)}_{j_{m}, \tilde{j}_{m}}} \beta_{i_L, \ldots, i_{l+2}} &= \beta_{i_L, \ldots, i_{l+2}} \sum_{s=l+2}^{L} \frac{1}{\tau^{'}\left(f^{(s)}_{i_s}(\xv)\right)} \derivativ{}{w^{(m)}_{j_{m}, \tilde{j}_{m}}}\tau^{'}\left(f^{(s)}_{i_s}(\xv)\right),
\end{align*}
with
      \begin{align*}
        &\derivativ{}{w^{(m)}_{i_{m}, j_{m}}}\left[\tau^{'}\left(f^{(l+1)}_{i_{l}}(\xv)\right)\right] \\ 
        & = \begin{cases}
          0 &\text{if } m\geq l+1\\[8pt]
          \displaystyle\frac{\sigma_w}{\sqrt{n}}\delta_{i_{l}i_{m}}\tau^{''}\left(f^{(l+1)}_{i_{l}}(\xv)\right)\tau\left(f^{(m)}_{j_{m}}(\xv)\right) &\text{if } m=l\\[8pt]
          \left(\displaystyle\frac{\sigma_w}{\sqrt{n}}\right)^2\tau^{''}\left(f^{(l+1)}_{i_{l}}(\xv)\right)\tau^{'}\left(f^{(m+1)}_{i_{m}}(\xv)\right)\tau\left(f^{(m)}_{j_{m}}(\xv)\right)w^{(m+1)}_{i_{l},i_{m}} &\text{if } m=l-1\\[8pt]
          \displaystyle\left(\frac{\sigma_w}{\sqrt{n}}\right)^{l-m+1}\tau^{''}\left(f^{(l+1)}_{i_{l}}(\xv)\right)\tau^{'}\left(f^{(m+1)}_{i_{m}}(\xv)\right) \tau\left(f^{(m)}_{j_{m}}(\xv)\right)\times\\[8pt]
          \quad\times\!\!\!\!\!\!\displaystyle\sum_{k_l,\dots,k_{m+2}=1}^{n}\!\!\!\!\!\!\!\!w^{(l)}_{i_{l},k_{l}}\tau^{'}\left(f^{(l)}_{k_{l}}(\xv)\right) \left(\displaystyle\prod_{s=m+2}^{l-1} w^{(s)}_{k_{s+1},k_{s}}\tau^{'}\left(f^{(s)}_{k_{s}}(\xv)\right)\right) w^{(m+1)}_{k_{m+2},i_{m}}&\text{if } m<l-1
        \end{cases}
      \end{align*}
      and
      \begin{align*}
        & \derivativ{}{w^{(m)}_{i_{m}, j_{m}}}\left[\tau\left(f^{(l)}_{j_{l}}(\xv)\right)\right] \\
        & = 
        \begin{cases}
          0 &\text{if } m\geq l\\[8pt]
          \displaystyle\frac{\sigma_w}{\sqrt{n}}\delta_{j_{l}i_{m}}\tau^{'}\left(f^{(l)}_{j_{l}}(\xv)\right)\tau\left(f^{(m)}_{j_{m}}(\xv)\right) &\text{if } m=l-1\\[8pt]
          \displaystyle\left(\frac{\sigma_w}{\sqrt{n}}\right)^2\tau^{'}\left(f^{(l)}_{j_{l}}(\xv)\right)\tau^{'}\left(f^{(m+1)}_{i_{m}}(\xv)\right)\tau\left(f^{(m)}_{j_{m}}(\xv)\right)w^{(m+1)}_{j_{l},i_{m}} &\text{if } m=l-2\\[8pt]
          \displaystyle\left(\frac{\sigma_w}{\sqrt{n}}\right)^{l-m}\tau^{'}\left(f^{(l)}_{j_{l}}(\xv)\right)\tau^{'}\left(f^{(m+1)}_{i_{m}}(\xv)\right) \tau\left(f^{(m)}_{j_{m}}(\xv)\right)\times\\
          \quad\times\!\!\!\!\!\!\!\!\!\displaystyle\sum_{k_{l-1},\dots,k_{m+2}=1}^{n}\!\!\!\!\!\!\!\! w^{(l-1)}_{j_{l},k_{l-1}}\tau^{'}\left(f^{(l)}_{k_{l-1}}(\xv)\right) \left(\prod_{s=m+2}^{l-2} w^{(s)}_{k_{s+1},k_{s}}\tau^{'}\left(f^{(s)}_{k_{s}}(\xv)\right)\right) w^{(m+1)}_{k_{m+2},i_{m}}&\text{if } m<l-2.
        \end{cases}
      \end{align*}

\section{Proof of Theorems \ref{theo:2layerNN} and Theorem \ref{theo:2_hidden_multi}}\label{sec:2hidden}

We will write $a \lesssim b$ if there exists a universal constant $C$ such that $a \leq C b$, and $a \asymp b$ if both $a \lesssim b$ and $b \lesssim a$. Both the proofs are essentially based on  Theorem \ref{theo:mutliNN}, adapted to the case $L=2$, and $p=1$ and $p \geq 2$ respectively. As outlined in the main body, after stating Theorem \ref{theo:mutliNN}, the biggest problem one has to face lies in the fact that there is not a straightforward way of controlling the expectations in the bound, since each node depends on the nodes of all the previous layers in a very convoluted manner. Nonetheless, it is still possible to overcome this problem in this case by conditioning on the previous hidden layer, since $f^{(1)}_{\cdot}(\xv)$ is normally distributed. We will show how to do this for a specific term in the bound, as for the others the same methodology can be applied. To simplify the notation, we will write $f_i^{(2)}(\xv):=f_i^{(2)}(\xv,n)$.

Without loss of generality, we can assume $\gamma>1$. 

We will make use several times of the following generalized Bahr-Esseen inequalities (\citet{Dharmadhikari69}): if $X_1,\dots,X_n$ are independent, zero mean random variables with finite $r$-th moment, for some $r>2$, then
$$
\mathbb E\left[\left|\sum_{k=1}^n X_k\right|^r\right]\leq c n^{r/2-1}\sum_{k=1}^n \mathbb E[|X_k|^r]
$$
where $c>0$ is a constant that depends only on $r$.

First, notice that, for every $r>2$, $\mathbb E[|f_i^{(1)}(\xv)|^r]$ is bounded by a constant that only depends on $r$ and $\xv$. Moreover, for every $r>0$,
\begin{align*}
    \mathbb E\left[|\tau(f_i^{(2)}(\xv)|^r\mid Y_.    \right] 
    &\leq \mathbb E\left[(\alpha+\beta|f_i^{(2)}(\xv)|)^r\mid Y_.    \right]\\
    &\leq 2^r\left(\alpha^r+\beta^r\mathbb E[|f_i^{(2)}(\xv)|^r\mid Y_.]\right)\\
    &\leq 2^r\alpha^r+2^r\beta^r\sigma_w^r n^{-r/2}\mathbb E\left[
    |\sum_{j=1}^nw_{i,j}^{(1)}\tau(f_j^{(1)}(\xv))|^r\mid Y_.    \right]\\
    &\leq  2^r\alpha^r+2^r\beta^r\sigma_w^r n^{-1}\sum_{j=1}^n\mathbb E\left[
    |w_{i,j}^{(1)}\tau(f_j^{(1)}(\xv))|^r\mid Y_.    \right]\\
    &\leq 2^r\alpha^r+2^r\beta^r\sigma_w^r \mathbb E[|Z|^r] n^{-1}\sum_{j=1}^n |\tau(f_j^{(1)}(\xv))|^r,
\end{align*}
where $Z\sim\mathcal N(0,1)$ and we have used the generalized Bahr-Esseen inequality and the fact that 
the random variables $w_{i,j}^{(1)}\tau(f_j^{(1)}(\xv))$ are conditionally independent, given $Y_.$, with zero conditional expectations. The same equations apply to $|\tau'(f_i^{(2)}(\xv))|$. It follows that, for every $r>0$ there exists a $c_r$ not depending on $n$ such that
$$
\max\left(\mathbb E\left[|\tau(f_i^{(2)}(\xv)|^r\right] ,\mathbb E\left[|\tau'(f_i^{(2)}(\xv)|^r\right]
\right)\leq c_r.
$$
We will now show how to bound $\sum_{i,j=1}^n \left\{ \mathbb{E}\left[\left(\derivativ{F}{w_i} \derivativ{F}{w_j} \right)^2\right] 
     \mathbb E\left[ \inner{\hessian{F}{w_i}{\cdot\,}}{\hessian{F}{w_j}{\cdot\,}}^2\right]\right\}^{1/2}
     $ from above. 
If $i\neq j$, then 
\begin{align*}
  \mathbb{E}\left[\left(\derivativ{F}{w_i} \derivativ{F}{w_j} \right)^2\right] 
= (\sigma_w n^{-1/2})^4 \mathbb E\left[\mathbb{E}\left[\tau^2(f^{(2)}_i(x))|Y_.\right]^2 \right]  
\leq \sigma_w^4 n^{-2} \mathbb E\left[\tau^4(f^{(2)}_i(x))\right]
\lesssim n^{-2}.
\end{align*}
For $i=j$, we can write that
$$\mathbb{E}\left[\left(\derivativ{F}{w_i}\right)^4\right] \leq \sigma_w^4 n^{-2} \mathbb E\left[\tau^4(f^{(2)}_i(x))\right]
\lesssim n^{-2}.$$
Let us now turn to  the expectation involving the Hessian. We have that 
\begin{align*}
    \inner{\hessian{F}{w_i}{\cdot\,}}{\hessian{F}{w_j}{\cdot\,}} & = \left(\frac{\sigma_w}{\sqrt{n}}\right)^4\tau^{'}\left(f_i^{(2)}(\xv)\right)\tau^{'}\left(f_j^{(2)}(\xv)\right)\left(\delta_{ij}\sum_{b=1}^{n}\tau\left(f_b^{(1)}(\xv)\right)^2 \right. \\
    & \left. \hspace{5cm}+\Gamma^2\sum_{b=1}^{n}w_{i,b}^{(1)}w_{j,b}^{(1)}\tau^{'}\left(f_b^{(1)}(\xv)\right)^2 \right),
\end{align*}
so that 
\begin{align*}
    \inner{\hessian{F}{w_i}{\cdot\,}}{\hessian{F}{w_j}{\cdot\,}}^2 & \leq \left(\frac{\sigma_w}{\sqrt{n}}\right)^8\tau^{'}\left(f_i^{(2)}(\xv)\right)^2\tau^{'}\left(f_j^{(2)} (\xv)\right)^2 \times \\
    & \times \left(2\delta_{ij}\left(\sum_{b=1}^{n}\tau\left(f_b^{(1)}(\xv)\right)^2 \right)^2 +2\Gamma^4\left(\sum_{b=1}^{n}w_{i,b}^{(1)}w_{j,b}^{(1)}\tau^{'}\left(f_b^{(1)}(\xv)\right)^2\right)^2 \right) \\
    & \lesssim  n^{-4} \delta_{ij} \tau^{'}\left(f_i^{(2)}(\xv)\right)^2\tau^{'}\left(f_j^{(2)} (\xv)\right)^2 \left(\sum_{b=1}^{n}\tau\left(f_b^{(1)}(\xv)\right)^2\right)^2 \\
    & + n^{-4}\tau^{'}\left(f_i^{(2)}(\xv)\right)^2\tau^{'}\left(f_j^{(2)} (\xv)\right)^2 \left(\sum_{b=1}^{n}w_{i,b}^{(1)}w_{j,b}^{(1)}\tau^{'}\left(f_b^{(1)}(\xv)\right)^2\right)^2.
\end{align*}
We will bound the expectations of the two terms of the sum separately. For the first term we have 
\begin{align*}
    \mathbb{E}&\left[n^{-4} \delta_{ij} \tau^{'}\left(f_i^{(2)}(\xv)\right)^2\tau^{'}\left(f_j^{(2)} (\xv)\right)^2 \left(\sum_{b=1}^{n}\tau\left(f_b^{(1)}(\xv)\right)^2\right)^2\right] \\
    &\leq n^{-4} \delta_{ij} \mathbb E\left[\left(\sum_{b=1}^{n}\tau\left(f_b^{(1)}(\xv)\right)^2\right)^2
    \mathbb E\left[
    \tau^{'}\left(f_i^{(2)}(\xv)\right)^2|Y_.
        \right]^2
    \right]\\
    &\leq n^{-4} \delta_{ij} \mathbb E\left[\left(\sum_{b=1}^{n}\tau\left(f_b^{(1)}(\xv)\right)^2\right)^2
    \mathbb E\left[
    \tau^{'}\left(f_i^{(2)}(\xv)\right)^4|Y_.
        \right]
    \right]\\
    &\leq  n^{-4} \delta_{ij} \left(\mathbb E\left[ \left(\sum_{b=1}^{n}\tau\left(f_b^{(1)}(\xv)\right)^2\right)^4 \right]\right)^{1/2}
    \left(\mathbb E\left[
    \tau^{'}\left(f_i^{(2)}(\xv)\right)^8
    \right]\right)^{1/2}\\
    & \lesssim n^{-2}  \delta_{ij},
\end{align*}
For the second term, we consider the cases $i=j$ and $i\neq j$ separately. For $i=j$ we can write that 
\begin{align*}
    &\mathbb{E}\left[n^{-4}\delta_{ij}\tau^{'}\left(f_i^{(2)}(\xv)\right)^4 \left(\sum_{b=1}^{n}(w_{i,b}^{(1)})^2\tau^{'}\left(f_b^{(1)}(\xv)\right)^2\right)^2 \right] \\
    &\leq n^{-4} \delta_{ij}c_8^{1/2} \left(\mathbb E\left[\left(\sum_{b=1}^{n}(w_{i,b}^{(1)})^2\tau^{'}\left(f_b^{(1)}(\xv)\right)^2\right)^4\right]\right)^{1/2}\\
    &\lesssim\delta_{ij}  n^{-2}.
\end{align*}
On the other hand, for $i\neq j$ we can write that
\begin{align*}
    \mathbb{E}&\left[n^{-4}\tau^{'}\left(f_i^{(2)}(\xv)\right)^2\tau^{'}\left(f_j^{(2)} (\xv)\right)^2 \left(\sum_{b=1}^{n}w_{i,b}^{(1)}w_{j,b}^{(1)}\tau^{'}\left(f_b^{(1)}(\xv)\right)^2\right)^2 \right] \\
    &\leq n^{-4}\left(\mathbb E\left[ \tau^{'}\left(f_i^{(2)}(\xv)\right)^4\tau^{'}\left(f_j^{(2)} (\xv)\right)^4 \right]\right)^{1/2}
    \left( \mathbb E\left[\left(\sum_{b=1}^{n}w_{i,b}^{(1)}w_{j,b}^{(1)}\tau^{'}\left(f_b^{(1)}(\xv)\right)^2\right)^4 \right] \right)^{1/2}\\
    &\leq n^{-4}c_8^{1/2}\left(
    \mathbb E\left[
    \mathbb E\left[
    \left(\sum_{b=1}^{n}w_{i,b}^{(1)}w_{j,b}^{(1)}\tau^{'}\left(f_b^{(1)}(\xv)\right)^2\right)^4
    \mid Y_.\right]
    \right]
        \right)^{1/2}\\
         &\leq n^{-4}c_8^{1/2}\left(
    \mathbb E\left[n
    \sum_{b=1}^{n}\mathbb E\left[|w_{i,b}^{(1)}w_{j,b}^{(1)}|^4|\tau^{'}\left(f_b^{(1)}(\xv)\right)|^8
    \mid Y_.\right]\right]
        \right)^{1/2}\\
        &\leq n^{-4}c_8^{1/2}\left(
    \mathbb E\left[n
    \sum_{b=1}^{n}|\tau^{'}\left(f_b^{(1)}(\xv)\right)|^8\mathbb E\left[|w_{i,b}^{(1)}w_{j,b}^{(1)}|^4
    \mid Y_.\right]\right]
        \right)^{1/2}\\
         &\leq n^{-4}c_8^{1/2}E\left[|Z|^4\right] 
         \left(n\mathbb E\left[
    \sum_{b=1}^{n}|\tau^{'}\left(f_b^{(1)}(\xv)\right)|^8 \right]
        \right)^{1/2}\\
        &\lesssim n^{-3}
\end{align*}
Summarizing, we can write that
\begin{align*}
   \sum_{i,j=1}^n \left\{ \mathbb{E}\left[\left(\derivativ{F}{w_i} \derivativ{F}{w_j} \right)^2\right] 
     \mathbb E\left[ \inner{\hessian{F}{w_i}{\cdot\,}}{\hessian{F}{w_j}{\cdot\,}}^2\right]\right\}^{1/2}
     \lesssim \sum_{i,j=1}^n  \{n^{-2}(\delta_{ij}n^{-2}+n^{-3})\}^{1/2}\lesssim n^{-1/2}.
\end{align*}
 The same rate can be found with analogous steps for all the other terms in the sum given by Theorem \ref{theo:mutliNN}, and taking the square root one more time gives the rate of $n^{-1/4}$. The proof in the case of $p$ output is essentially the same, apart from the fact that we have an extra sum over $p$ index, which leads to the rate $\mathcal{O}(\sqrt{p/\sqrt{n}})$. \\

As stated in the main body, this rate is worse than the one in \citet{trevisan22}, but in order to show that this in not “our fault”, but it is due to the intrinsic behaviour of these Poincaré inequality in this setting, we will now show that the same rate $n^{-1/4}$ is obtained in the case $\tau = id$, the identity function, which is arguably the nicest setting possible. Indeed, if we consider the NN 
\[
F := n^{-1} \sum_{i=1}^n \sum_{j=1}^n w_{i} w^{(1)}_{i,j} Y_j,
\]
we can compute explicitly \[
 \mathbb{E}\left[\left(\derivativ{F}{w_i} \derivativ{F}{w_j} \right)^2\right] \quad \text{ and } \quad \mathbb{E}\left[ \inner{\hessian{F}{w_i}{\cdot\,}}{\hessian{F}{w_j}{\cdot\,}}^2 \right],
\]
and see that they lead to the same suboptimal rate of $n^{-1/4}$. As for the first term, we have 
\begin{align*}
    \mathbb{E}\left[\left(\derivativ{F}{w_i} \derivativ{F}{w_j} \right)^2\right] &= n^{-2} \mathbb{E}\left[\mathbb{E}\left[(f^{(2)}_i(x))^2 | Y_. \right]^2 \right],
\end{align*}
and \begin{align*}
    \mathbb{E}\left[(f^{(2)}_i(x))^2 | Y_. \right] &= n^{-1}\mathbb{E}\left[\left(\sum_{j=1}^n w_{i,j}^{(1)}Y_j\right)^2 \bigg| Y_. \right] = n^{-1} \mathbb{E}\left[\sum_{j,k=1}^n w_{i,j}^{(1)} w_{i,k}^{(1)} Y_j Y_k \bigg| Y_. \right] = n^{-1} \sum_{j=1}^n Y_j^2,
\end{align*}
so that 
\begin{align*}
    \mathbb{E}\left[\left(\derivativ{F}{w_i} \derivativ{F}{w_j} \right)^2\right] &= n^{-4} \mathbb{E}\left[\left(\sum_{j=1}^n Y_j^2\right)^2 \right] = n^{-4} \sum_{j,k=1}^n \mathbb{E}\left[Y_j^2 Y_k^2\right] \asymp n^{-2}.
\end{align*}
As for the second term, 
\begin{align*}
    \mathbb{E}\left[ \inner{\hessian{F}{w_i}{\cdot\,}}{\hessian{F}{w_j}{\cdot\,}} \right] & \asymp  n^{-4} \delta_{ij} \mathbb{E}\left[\left(\sum_{b=1}^{n}Y_b^2\right)^2\right] + n^{-4}\mathbb{E}\left[\left(\sum_{b=1}^{n}w_{i,b}^{(1)}w_{j,b}^{(1)}\right)^2\right] \\
    & = n^{-4} \delta_{ij} (2n + n^2) + n^{-4}\mathbb{E}\left[\left(\sum_{b=1}^{n}w_{i,b}^{(1)}w_{j,b}^{(1)}\right)^2\right],
\end{align*}
since $\sum_{b=1}^{n}Y_b^2 \sim \chi_n^2$, and $\mathbb{E}[\chi_n^2] = n$ and $\operatorname{Var}[\chi_n^2] = 2n$. Also,  
\begin{align*}
\mathbb{E}\left[\left(\sum_{b=1}^{n}w_{i,b}^{(1)}w_{j,b}^{(1)}\right)^2\right] &= \mathbb{E}\left[\sum_{a,b=1}^{n}w_{i,b}^{(1)}w_{j,b}^{(1)} w_{i,a}^{(1)}w_{j,a}^{(1)} \right] = \sum_{a,b=1}^{n} \mathbb{E}\left[w_{i,b}^{(1)}w_{j,b}^{(1)} w_{i,a}^{(1)}w_{j,a}^{(1)} \right] \\
& \lesssim \sum_{a,b=1}^{n} \left[ \delta_{ij} + (1-\delta_{ij})\delta_{ab}) \right] = n^2 \delta_{ij} + n (1-\delta_{ij}),
\end{align*}
hence 
\begin{align*}
    \mathbb{E}\left[ \inner{\hessian{F}{w_i}{\cdot\,}}{\hessian{F}{w_j}{\cdot\,}}^2 \right] \lesssim n^{-4} \left[\delta_{ij} (2n + n^2) +  n^2 \delta_{ij} + n (1-\delta_{ij}) \right] \lesssim n^{-2}\delta_{ij} + (1-\delta_{ij})n^{-3}. 
\end{align*}
Combining the two terms we get something of the order $n^{-4}\delta_{ij} + (1-\delta_{ij})n^{-5}$, and after taking the square root, something like 
$\sqrt{n^{-4}\delta_{ij} + (1-\delta_{ij})n^{-5}} \lesssim n^{-2}\delta_{ij} + (1-\delta_{ij})n^{-5/2}$. The same is true for all the others terms which appear in the bound of Theorem \ref{theo:mutliNN}, hence, summing over all $i,j \in [n]$, gives a rate whose leading term is again of the order $n^{-1/4}$. \\

%The sub-optimality of the rate $n^{-1/4}$ is even confirmed in simulations: 
%
%\begin{figure}[h!ht]
%\centering
%\includegraphics[scale=0.5]{pictures/rate_identity.png}
%\caption{Simulation of the KS distance between the output of NN $F[id]$ defined above and the Gaussian law with same mean and variance (in blue). Best approximation of the order $n^{-1/2}$ (in red) and of the order $n^{-1/4}$ (in orange).}
%\end{figure}

\section{Numerical illustrations}\label{sec:simulation}
We present a simulation study with respect to two choices of the activation function $\tau$: i) $\tau(x) = \tanh{x}$, which is polynomially bounded with parameters $\alpha = 1$ and $\beta = 0$; ii) $\tau(x) = x^3$, which is polynomially bounded with parameters $\alpha = 6$, $\beta = 1$ and $\gamma = 3$. Each of the plots below is obtained as follows: for a fixed width of $n = k^3$, with $k \in \{1, \cdots, 16 \}$, we simulate $5000$ points from a single-layer NN as in Theorem \ref{theo:slnn} to produce an estimate of the distance between the NN and a Gaussian random variable with mean $0$ and variance $\sigma^2$, which is estimated by means of a Monte-Carlo approach. Estimates of the KS and TV distance are produced by means of the functions \texttt{KolmogorovDist} and \texttt{TotVarDist} from the package \pkg{distrEx} by \citet{distrEx} while those of the 1-Wasserstein distance using the function \texttt{wasserstein1d} from the package \pkg{transport} by \citet{transport}. We repeat this procedure $2000$ times for every fixed $n \in \{3, 6, \cdots, 51 \}$, compute the sample mean (blue dots), and compare these estimates with the theoretical explicit bound given by Theorem \ref{theo:slnn} (green dots), and with the implicit bound given by Theorem \ref{theo:mutliNN} (red dots).
\begin{figure}[!ht!]
      \includegraphics[width=0.45\textwidth]{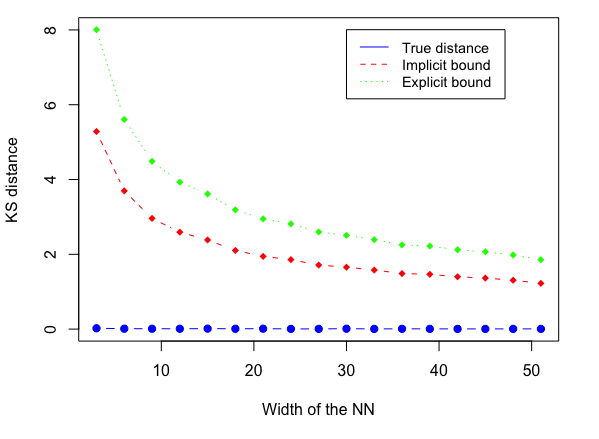}\includegraphics[width=0.45\textwidth]{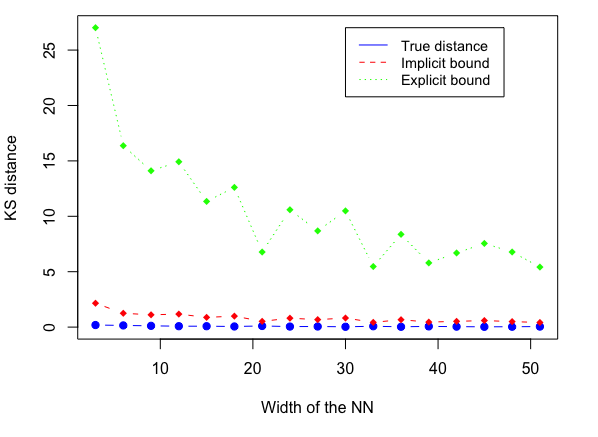}
      \caption{Estimates of the Kolmogorov-Smirnov distance for a Shallow NN of varying width $n \in \{3, 6, \cdots, 51 \}$, with $\tau(x) = \tanh{x}$ (left) and $\tau(x) = x^3$ (right).}
\end{figure}
\begin{figure}[h!ht]
      \includegraphics[width=0.45\textwidth]{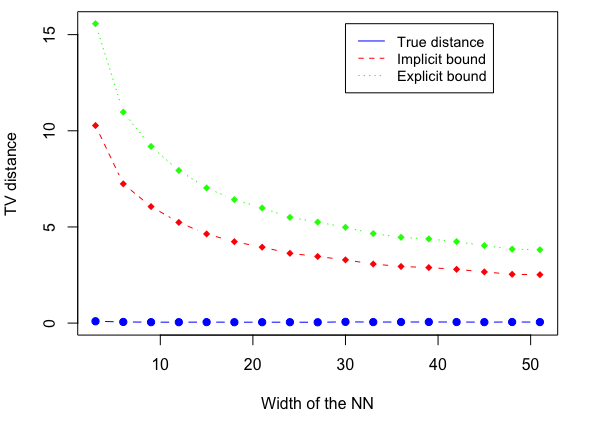}\includegraphics[width=0.45\textwidth]{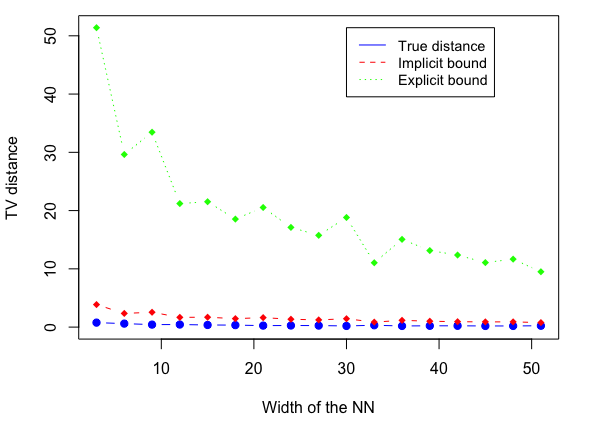}
      \caption{Estimates of the Total Variation distance for a Shallow NN of varying width $n \in \{3, 6, \cdots, 51 \}$, with $\tau(x) = \tanh{x}$ (left) and $\tau(x) = x^3$ (right).}
\end{figure}
\begin{figure}[h!ht]
      \includegraphics[width=0.45\textwidth]{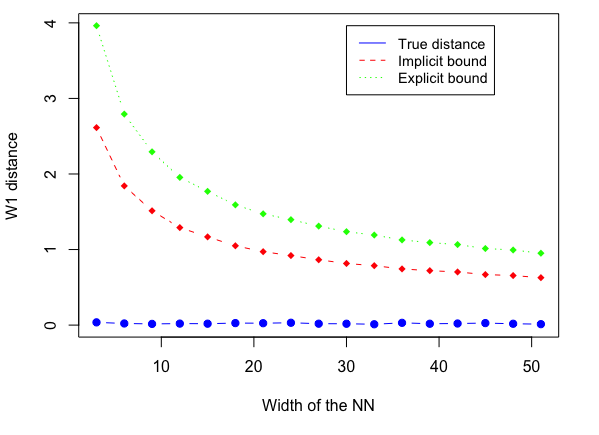}\includegraphics[width=0.45\textwidth]{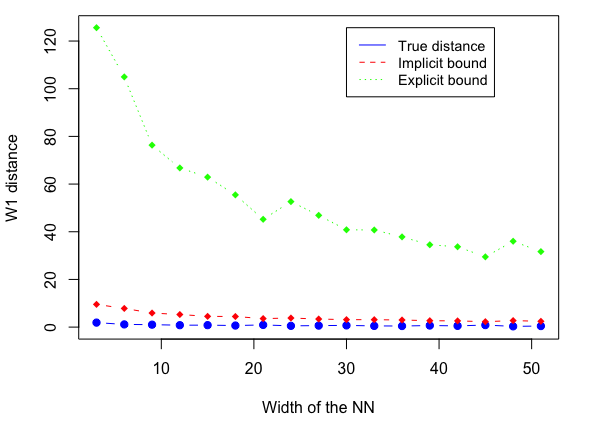}
      \caption{Estimates of the 1-Wasserstein distance for a Shallow NN of varying width $n \in \{3, 6, \cdots, 51 \}$, with $\tau(x) = \tanh{x}$ (left) and $\tau(x) = x^3$ (right).}
\end{figure}

All the figures confirm that the distance between a shallow NN and an arbitrary Gaussian random variable, with the same mean and variance, is $\lesssim n^{-1/2}$, with approximation errors improving as the width $n \rightarrow \infty$. The evaluation of the implicit bound of Theorem \ref{theo:mutliNN} results in much tighter estimate of the distance than what provided by the explicit bound, which highlight the rate $n^{-1/2}$ at the cost of having a looser constant. This is clear in the case $\tau(x) = x^3$, where the polynomial envelope assumption leads to a much rougher bound to the one you may get computing the derivatives explicitly.

\end{document}